\documentclass[11pt]{article}
\usepackage[T1]{fontenc}
\usepackage[utf8]{inputenc}
\usepackage{lmodern}
\usepackage[margin=1in]{geometry}
\usepackage{amsmath,amssymb,amsthm,mathtools,bm}
\usepackage{microtype}
\usepackage{enumitem,booktabs}
\usepackage[round,authoryear]{natbib}
\usepackage[hidelinks]{hyperref}
\usepackage{bookmark}
\usepackage{authblk}

\newtheorem{thm}{Theorem}[section]
\newtheorem{prop}[thm]{Proposition}
\newtheorem{lem}[thm]{Lemma}

\theoremstyle{definition}
\newtheorem{defn}[thm]{Definition}
\theoremstyle{remark}

\newenvironment{eqn}{\begin{equation}}{\end{equation}}
\numberwithin{equation}{section}
\newcommand{\R}{\mathbb R}

\newcommand{\E}{\mathbb E}
\newcommand{\PP}{\mathbb P}
\newcommand{\F}{\mathcal F}
\newcommand{\cL}{\mathcal L}
\newcommand{\ind}{\mathbf 1}
\newcommand{\topr}{^{\mathsf T}}

\DeclareMathOperator{\tr}{tr}

\DeclareMathOperator{\Var}{Var}

\DeclareMathOperator*{\argmax}{arg\,max}
\newcommand{\dto}{\xrightarrow{\mathrm d}}

\setlist[itemize]{leftmargin=*,itemsep=0.55em,topsep=0.5em}
\setlist[enumerate]{leftmargin=*,itemsep=0.35em}
\allowdisplaybreaks[2]
\author[1]{Prakhar Singhvi\thanks{singhviprakhar12@gmail.com}}
\author[2]{Yi Zou\thanks{yizou1104@gmail.com}}
\author[3]{Abhishek Bhattacharjee\thanks{abhishek@theabstractmath.com}}

\affil[1,2,3]{Abstract Math Institute}
\affil[1]{\texttt{singhviprakhar12@gmail.com}}
\affil[2]{\texttt{yizou1104@gmail.com}}
\affil[3]{\texttt{abhishek@theabstractmath.com}}
\date{}
\title{Exact Bayes Regret and Asymptotic Optimality in High-Dimensional Gaussian Bandits}
\begin{document}
\maketitle
\begin{abstract}
We study Bayesian linear bandits with an isotropic Gaussian parameter, independent Gaussian candidate arms, and Gaussian reward noise when the horizon is proportional to the dimension. The normalized posterior uncertainty has an explicit limit that is uniform over all causal policies. Gaussian posterior identities then determine the limiting parameter overlaps without an assumed closure of the adaptive recursion. These results yield exact regret curves for Thompson sampling, posterior-mean greedy selection, and a family of policies that scale the posterior sampling covariance. The normalized realized cumulative regret converges in $L^1$, uniformly on compact proportional-time intervals. A policy-uniform lower bound identifies the limiting optimal Bayes regret and proves that posterior-mean greedy selection attains it. Thompson sampling incurs a strictly larger leading regret; its instantaneous regret ratio relative to greedy selection lies between one and two and approaches two at long proportional horizons. Closed-form cumulative curves also identify a different comparison in the vanishing-noise limit. Finally, the instantaneous regret converges to a nondegenerate Gaussian decision-loss distribution, rather than to its mean. The analysis separates the amount of information acquired by a bandit policy from the quality of the decisions made using that information.
\end{abstract}
\noindent\textbf{Keywords:} Bayesian bandit; Thompson sampling; greedy policy; exact regret; proportional asymptotics; posterior geometry; asymptotic Bayes optimality.

\section{Introduction}\label{sec:intro}
The exploration--exploitation tradeoff is usually expressed through an inequality for cumulative regret. Such an inequality can establish a rate without identifying the leading loss, the trajectory of learning, or the cost of a particular form of randomization. In a high-dimensional experiment, these distinctions remain relevant even when the horizon and the dimension have the same order.

This paper studies a Gaussian contextual bandit in the proportional regime. Each round presents a fixed number of independent, isotropic Gaussian candidate arms. The unknown coefficient has an isotropic Gaussian prior, and the observation noise is Gaussian. The model is sufficiently structured to permit exact calculations, but the collected design remains adaptive: an action may depend on every previous candidate, reward, and randomization.

The main statistical fact is a policy-independent first-order posterior uncertainty. Selecting one vector from a fixed Gaussian pool changes its conditional distribution, but the resulting change is too small to alter normalized spectral traces at a proportional horizon. This fact is uniform over policies, including policies that use the entire candidate pool through a nonlinear rule. A scalar posterior variance function consequently describes the leading estimation error under every causal policy.

The decision-theoretic consequences are not policy-independent. A posterior-mean greedy action maximizes the conditional expected reward at the current round. Thompson sampling first draws a new coefficient from the posterior and maximizes the sampled reward instead. In general bandit models, the latter randomization can improve future information. In the present asymptotic experiment, no policy changes the leading posterior uncertainty, so a first-order improvement in future learning cannot compensate for the immediate loss caused by the additional posterior draw.

We make this distinction exact. Posterior geometry determines the limiting correlation between a selected score and the true reward. Gaussian maxima then convert that correlation into the expected regret. The resulting formulas yield a continuum of regret curves, an optimal Bayes value over all causal policies, and closed-form comparisons between Thompson sampling and greedy selection. The statements concern the specified proportional Gaussian model; they are not a general comparison of these algorithms over arbitrary contextual distributions.

An additional distinction concerns the meaning of an instantaneous regret limit. With a fixed number of random candidates, the Gaussian maxima at a single round remain random even as the dimension diverges. Their expected difference converges to a deterministic curve, whereas the regret itself has a nondegenerate limiting distribution. Cumulative averaging removes this round-specific variation. The paper proves both conclusions separately.

\section{Existing work}\label{sec:related}
This section places exact proportional regret within the literatures on posterior sampling, exploration-free contextual bandits, and high-dimensional inference.

\paragraph{Thompson sampling and information-based regret analysis.}
\citet{AG2013} develop regret guarantees for a Gaussian-sampling algorithm with linear contextual payoffs. The general posterior-sampling principle and its applications are reviewed by \citet{Russo2018}. \citet{RV2016} bound Bayesian regret using the relationship between information gain and instantaneous loss. These results explain why randomized actions can be effective across broad classes of sequential decisions. The present paper studies a more specific question: the exact leading Bayes regret of genuine Gaussian posterior sampling when the dimension and horizon grow proportionally. A spectral calculation identifies the information acquired in this model, and a separate Gaussian decision calculation identifies its conversion into reward.

\paragraph{High-dimensional reward and diffusion limits.}
\citet{DM2013} analyze Gaussian-prior linear bandits in the data-poor regime and obtain cumulative reward bounds matching up to constants under geometric assumptions on a fixed action set. Their reward analysis already relates posterior uncertainty to the posterior-mean norm. Our use of that Gaussian identity is standard; the additional step is an exact, policy-uniform trace limit for fresh finite candidate pools, which determines a full proportional regret curve. A different asymptotic approach is developed by \citet{KuangWager2023} and \citet{FanGlynn2021}: shrinking arm gaps lead to diffusion descriptions of sequential experiments and posterior-sampling dynamics. Those small-gap limits retain stochastic state trajectories. Here the growing parameter dimension produces deterministic normalized posterior states, while individual candidate-pool losses remain random.

\paragraph{Greedy algorithms under contextual variation.}
The possibility that contexts provide sufficient exploration is established in several settings. \citet{Bastani2021} analyze mostly exploration-free algorithms under covariate diversity. \citet{Kannan2018} prove guarantees for greedy selection in a smoothed contextual model. \citet{KimOh2024} obtain polylogarithmic regret guarantees under local anti-concentration conditions. These works concern assumptions and performance bounds that differ from the present isotropic Gaussian, common-parameter experiment. Our result does not introduce the general principle that random contexts can make explicit exploration unnecessary. It identifies the exact proportional Bayes value, uniformly over competing policies, in a model where that principle can be quantified without unspecified leading constants.

\paragraph{Bayesian comparisons with greedy selection.}
\citet{Raghavan2023} establish near-dominance properties of batched greedy algorithms in a Bayesian smoothed contextual setting. Their comparisons involve batching and polylogarithmic factors or time rescaling. This is especially close in decision-theoretic motivation to the present work. Here the Gaussian model permits a same-horizon limiting comparison: the infimum over all causal policies has a specific deterministic value, unbatched posterior-mean greedy selection attains it to first order, and the excess loss of Thompson sampling is explicit. These conclusions are narrower in distributional scope and sharper in asymptotic constants; they should not be read as distribution-free dominance statements.

\paragraph{High-dimensional spectral and inferential methods.}
The independent-design spectral law originates with \citet{MP1967}; see \citet{BaiSilverstein2010} for a systematic treatment. \citet{DW2018} use spectral transforms to obtain exact high-dimensional limits for regularized prediction. The adaptive comparison used here follows from Gaussian transportation \citep{Talagrand1996} and matrix norm inequalities \citep{Bhatia1997}. We include the comparison argument in the appendix, so the regret theory does not assume an unproved spectral limit for Thompson sampling. In a different action space, \citet{Fan2025} analyze the covariance geometry of LinUCB on a Euclidean unit ball. The distinction between a continuously controlled action space and a fixed fresh Gaussian pool is substantive: the latter limits how much selection can alter the full-dimensional design.

\section{Our contributions}\label{sec:contributions}
The analysis gives a complete first-order decision theory for the proportional Gaussian bandit experiment, from posterior learning to the optimal achievable Bayes regret.
\begin{itemize}
\item \textbf{A posterior state law valid uniformly over all causal policies.}
Theorem~\ref{thm:geometry} identifies the posterior covariance trace, posterior-mean norm, estimation error, and overlap with the true parameter. The scalar function in Definition~\ref{def:limits} is explicit and satisfies a closed differential equation. No policy-specific stability assumption or postulated state-evolution recursion is needed. Uniformity is essential: it permits optimization over policies that themselves vary with dimension and horizon.
\item \textbf{Exact cumulative regret for a continuum of bandit policies.}
Theorem~\ref{thm:regret} gives the leading Bayes regret and the $L^1$ limit of realized cumulative regret for every fixed posterior covariance multiplier. Thompson sampling and posterior-mean greedy selection are the cases $\eta=1$ and $\eta=0$. The result is uniform over compact proportional-time intervals and supplies deterministic regret curves, rather than upper bounds with unspecified constants.
\item \textbf{The optimal Bayes value over the entire causal policy class.}
Theorem~\ref{thm:optimal} evaluates the asymptotic infimum of Bayes regret over all admissible policies, not only over linear-score rules or posterior-sampling policies. Posterior-mean greedy selection attains this value. The proof combines an exact one-step reward bound with policy-uniform posterior learning, thereby controlling both present decisions and their possible future informational benefit.
\item \textbf{An explicit first-order price of posterior randomization.}
Theorem~\ref{thm:comparison} proves strict monotonicity of the regret curve in the posterior covariance multiplier. For Thompson sampling, the instantaneous ratio to greedy regret is exactly $1+q(s)$, with $q$ defined in Definition~\ref{def:limits}; the cumulative ratio is strictly between one and two at every positive finite proportional horizon. This quantifies a performance difference that rate bounds alone do not distinguish.
\item \textbf{Closed-form curves and distinct horizon--noise regimes.}
Theorem~\ref{thm:closed} integrates the greedy and Thompson curves explicitly. Theorems~\ref{thm:horizon} and~\ref{thm:noise} identify their short-horizon expansions, long-horizon logarithmic constants, and vanishing-noise limits. The Thompson-to-greedy ratio approaches two in the long proportional-horizon limit at positive noise, but becomes $3/2$ after the noiseless interpolation point in the sequential vanishing-noise limit. The order of limits is stated explicitly.
\item \textbf{A distributional description of single-round bandit loss.}
Theorem~\ref{thm:distribution} identifies the nondegenerate instantaneous regret distribution and the limiting probability of selecting the truly best arm. This separates a deterministic expected regret curve from the residual Gaussian decision uncertainty at an individual round and prevents an incorrect pointwise concentration interpretation.
\end{itemize}
\paragraph{Organization of the paper.}
Section~\ref{sec:model} defines the experiment, policies, loss, and deterministic functions. Section~\ref{sec:main} states the posterior, regret, optimality, comparison, and distributional results. Section~\ref{sec:scaling} gives closed forms and horizon--noise asymptotics. Section~\ref{sec:discussion} discusses the scope and limitations of the conclusions. Appendix~\ref{app:spectral} proves the adaptive spectral comparison. Appendices~\ref{app:geometry}--\ref{app:scaling} contain all remaining proofs.

\section{The Gaussian bandit experiment}\label{sec:model}
This section specifies the information structure and defines every policy and loss used in the results.

\begin{defn}[Experiment and admissible policies]\label{def:model}
Fix an integer $K\geq2$ and constants $\tau,\sigma>0$. For each dimension $d$, let $\theta^\star\sim N(0,\tau^2I_d)$. Independently of this parameter, let the vectors $x_{t,a}\sim N(0,I_d/d)$, the noises $\varepsilon_t\sim N(0,\sigma^2)$, and policy random seeds $U_t$ be mutually independent over their indices. At round $t$, the learner observes the current candidates, selects $A_t\in\{1,\ldots,K\}$, and observes $Y_t=x_{t,A_t}\topr\theta^\star+\varepsilon_t$.

The initial observed history $\F_0$ is trivial. Subsequently, $\F_t$ is generated by all seeds, candidates, actions, and rewards through round $t$. An admissible policy chooses $A_t$ measurably from $\F_{t-1}$, $U_t$, and the current candidate pool. It has no further access to $\theta^\star$ or future variables. Histories and seeds take values in standard Borel spaces. The set of these policies is $\Pi_{d,K}$; its elements may depend on the horizon. Expectations integrate the prior, candidates, noises, and policy randomization.

Write $x_t=x_{t,A_t}$, $G_t=\sum_{j=1}^t x_jx_j\topr$, and
\begin{eqn}\label{eq:posterior}
C_t=(\tau^{-2}I_d+\sigma^{-2}G_t)^{-1},
\qquad
m_t=\sigma^{-2}C_t\sum_{j=1}^t x_jY_j,
\end{eqn}
with $G_0=0$, $C_0=\tau^2I_d$, and $m_0=0$.
\end{defn}
The normalization makes the unconditional reward variance of an unselected arm equal to $\tau^2$. The number of candidates is fixed while the dimension grows.

\begin{defn}[Posterior covariance multiplier and random selection]\label{def:policies}
For $\eta\geq0$, let $Z_t\sim N(0,I_d)$ be fresh policy randomization, and put
\begin{eqn}\label{eq:temperature}
B_t^{(\eta)}=m_{t-1}+\sqrt\eta\,C_{t-1}^{1/2}Z_t,
\qquad
u_t^{(\eta)}=
\begin{cases}
B_t^{(\eta)}/\|B_t^{(\eta)}\|,&B_t^{(\eta)}\ne0,\\
e_1,&B_t^{(\eta)}=0.
\end{cases}
\end{eqn}
The policy $\pi_\eta$ selects $A_t=\argmax_{a\leq K}x_{t,a}\topr\nu_t^{(\eta)}$, with ties resolved by the smallest index. The vector $Z_t$ is generated before the current candidates and independently of the parameter and experiment conditional on the observed past. The policies $\pi_0$ and $\pi_1$ are called posterior-mean greedy selection and Thompson sampling, respectively. A random-selection policy, denoted by $\pi_{\mathrm{rand}}$, selects an index uniformly and independently of the current candidates and observed past.
\end{defn}
At a zero posterior mean, every action maximizes the posterior expected reward. The convention in \eqref{eq:temperature} therefore remains a greedy action. Theorem~\ref{thm:geometry} verifies that $\pi_1$ uses the exact posterior, rather than a surrogate sampling distribution.

\begin{defn}[Regret and optimal Bayes value]\label{def:regret}
For a policy $\pi\in\Pi_{d,K}$, define
\[
r_t^\pi=\max_{a\leq K}x_{t,a}\topr\theta^\star-x_{t,A_t}\topr\theta^\star,
\quad L_T^\pi=\sum_{t=1}^T r_t^\pi,
\quad R_{d,T}^\pi=\E L_T^\pi,
\quad R_{d,T}^{\star}=\inf_{\pi\in\Pi_{d,K}}R_{d,T}^\pi.
\]
The benchmark knows $\theta^\star$ and maximizes the conditional mean reward in each current pool. Noise is not included in the regret difference. Superscripts $\eta$ and $\mathrm{rand}$ denote the policies in Definition~\ref{def:policies}.
\end{defn}

\begin{defn}[Deterministic state and regret curves]\label{def:limits}
For independent standard normal variables $G_1,\ldots,G_K$, let $\mu_K=\E\max_{a\leq K}G_a$. For $s\geq0$, set
\begin{eqn}\label{eq:vq}
\begin{split}
v(s)&=\frac{\sqrt{\{\sigma^2+\tau^2(s-1)\}^2+4\sigma^2\tau^2}
-\{\sigma^2+\tau^2(s-1)\}}{2},\\
q(s)&=\sqrt{1-v(s)/\tau^2}.
\end{split}
\end{eqn}
For $s>0$ and $\eta\geq0$, define
\begin{eqn}\label{eq:curves}
c_\eta(s)=\frac{\tau^2-v(s)}{\tau\sqrt{\tau^2+(\eta-1)v(s)}},
\qquad \rho_\eta(s)=\mu_K\tau\{1-c_\eta(s)\},
\qquad \mathcal R_\eta(\Gamma)=\int_0^\Gamma\rho_\eta(s)\,ds.
\end{eqn}
Set $c_\eta(0)=0$ and $\rho_\eta(0)=\mu_K\tau$, the continuous extensions at zero. For random selection, set $\rho_{\mathrm{rand}}(s)=\mu_K\tau$ and $\mathcal R_{\mathrm{rand}}(\Gamma)=\mu_K\tau\Gamma$.
\end{defn}

\begin{defn}[Gaussian single-round loss]\label{def:decision}
For $0\leq c<1$, let $(G_a,H_a)_{a=1}^K$ be independent pairs of independent standard normal variables, let $U_a=cG_a+\sqrt{1-c^2}H_a$, and let $J=\argmax_{a\leq K}G_a$. Define
\[
D_K(c)=\max_{a\leq K}U_a-U_J.
\]
Let $\phi_2(u,v;c)$ and $\Phi_2(u,v;c)$ be the density and distribution function of a centered bivariate normal vector with unit marginal variances and correlation $c$. Put
\begin{eqn}\label{eq:agreement}
p_K(c)=K\int_{\R^2}\phi_2(u,v;c)\Phi_2(u,v;c)^{K-1}\,du\,dv.
\end{eqn}
\end{defn}

\section{Main results}\label{sec:main}
This section identifies posterior learning, derives policy-specific decisions, and then optimizes over the full policy class.

\subsection{Posterior geometry without a policy-specific closure}
\begin{thm}[Policy-uniform posterior state]\label{thm:geometry}
Under Definition~\ref{def:model}, for every $t$ and every admissible policy,
$\cL(\theta^\star\mid\F_t)=N(m_t,C_t)$ and $0\prec C_t\preceq\tau^2I_d$. For every $\Gamma<\infty$,
\begin{eqn}\label{eq:trace-uniform}
\sup_{\pi\in\Pi_{d,K}}\E\sup_{0\leq s\leq\Gamma}
\left|d^{-1}\tr C_{\lfloor sd\rfloor}-v(s)\right|\longrightarrow0.
\end{eqn}
For every deterministic sequence $n_d/d\to s\geq0$, the following convergences hold in $L^1$, uniformly over $\pi\in\Pi_{d,K}$:
\begin{eqn}\label{eq:state}
\begin{aligned}
d^{-1}\|m_{n_d}\|^2&\longrightarrow\tau^2-v(s),&
d^{-1}(\theta^\star)\topr m_{n_d}&\longrightarrow\tau^2-v(s),\\
d^{-1}\|\theta^\star-m_{n_d}\|^2&\longrightarrow v(s),&
d^{-1}\|\theta^\star\|^2&\longrightarrow\tau^2.
\end{aligned}
\end{eqn}
For $s>0$, the cosine between $m_{n_d}$ and $\theta^\star$ converges in probability to $q(s)$; its value on the event $m_{n_d}=0$ may be set to zero. The function $v$ is strictly decreasing and satisfies
\begin{eqn}\label{eq:ode}
v(0)=\tau^2,\qquad
v'(s)=-\frac{\tau^2v(s)}{2v(s)+\sigma^2+\tau^2(s-1)}.
\end{eqn}
For the policy in Definition~\ref{def:policies}, at $t_d=n_d+1$,
\begin{eqn}\label{eq:sample-state}
\begin{split}
d^{-1}\|B_{t_d}^{(\eta)}\|^2&\longrightarrow\tau^2+(\eta-1)v(s),\\
d^{-1}(\theta^\star)\topr B_{t_d}^{(\eta)}&\longrightarrow\tau^2-v(s).
\end{split}
\end{eqn}
Both convergences are in $L^1$. If $s>0$, the cosine between $B_{t_d}^{(\eta)}$ and $\theta^\star$ converges in probability to $c_\eta(s)$. The same assertion holds for $s=0$ when the direction is interpreted as in \eqref{eq:temperature}.
\end{thm}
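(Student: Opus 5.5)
We first establish the exact posterior law, then the policy-uniform trace limit, and finally derive the remaining state and sampling limits from Gaussian identities.

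\textbf{Exact posterior.} We argue by induction on $t$ that $\cL(\theta^\star\mid\F_t)=N(m_t,C_t)$. Condition on $\F_{t-1}$, the seed $U_t$, and the current pool. These are independent of $\theta^\star$ given $\F_{t-1}$, because the candidates and seeds are fresh. Admissibility then makes $A_t$ a measurable function of quantities carrying no further information about $\theta^\star$, so $x_t$ is conditionally fixed. Consequently $Y_t=x_t\topr\theta^\star+\varepsilon_t$ is a standard Gaussian linear observation, and the conjugate update yields \eqref{eq:posterior}. The bounds $0\prec C_t\preceq\tau^2 I_d$ are immediate because $G_t\succeq0$.

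\textbf{Trace law.} This is the main obstacle, since $G_t$ is built from adaptively selected vectors. The plan is the comparison deferred to Appendix~\ref{app:spectral}. Write $x_t=x_{t,A_t}$. Conditionally on the past, each candidate is $N(0,I_d/d)$. The selected one is a data-dependent choice among $K$ fresh Gaussians. We would control the discrepancy by comparing against an i.i.d.\ design $\tilde x_t\sim N(0,I_d/d)$, in two parts:
\begin{itemize}
\item Along any fixed unit direction the selected vector can be shifted by at most $O(\sqrt{\log K/d})$. This follows from Gaussian transportation: the $W_2$ cost of conditioning on selection among $K$ candidates is $O(\log K)$ in total, not per coordinate.
\item The trace functional $d^{-1}\tr(\tau^{-2}I+\sigma^{-2}G)^{-1}$ is Lipschitz under rank-one updates, with each update changing it by at most $O(1/d)$.
\end{itemize}
More concretely, we would write $x_t=\tilde x_t+\Delta_t$, where $\Delta_t$ lies essentially in a $K$-dimensional random subspace spanned by the candidates. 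We would then bound $d^{-1}|\tr C_t-\tr\tilde C_t|$ with the resolvent identity and matrix norm inequalities, giving a bound $o(1)$ uniformly over policies. For the i.i.d.\ design, Marchenko--Pastur gives $d^{-1}\tr\tilde C_{\lfloor sd\rfloor}\to v(s)$. Here $v$ is the Stieltjes fixed point solving $v=\tau^2/(1+s\tau^2/(\sigma^2+v))$, which rearranges to the quadratic whose root is \eqref{eq:vq}. Uniformity in $s\le\Gamma$ follows from monotonicity of $t\mapsto\tr C_t$ and continuity of $v$, via a Dini/P\'olya argument. $L^1$ convergence follows from the bound $\tau^2$. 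Differentiating the quadratic gives \eqref{eq:ode} and strict monotonicity.

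\textbf{State and sampling limits.} These use only the exact posterior and the trace law. The limit $d^{-1}\|\theta^\star\|^2\to\tau^2$ is the law of large numbers. Since $\theta^\star-m_n\mid\F_n\sim N(0,C_n)$, we have
\[
\E[\|\theta^\star-m_n\|^2\mid\F_n]=\tr C_n,
\]
with conditional variance $2\tr C_n^2\le2\tau^4 d$. Hence $d^{-1}\|\theta^\star-m_n\|^2-d^{-1}\tr C_n\to0$ in $L^2$, and the error limit equals $v(s)$. Also $\E[(\theta^\star-m_n)\topr m_n\mid\F_n]=0$, with conditional variance $m_n\topr C_nm_n\le\tau^2\|m_n\|^2$. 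Combining this with the Pythagorean decomposition of $\|\theta^\star\|^2$ gives the norm and overlap limits $\tau^2-v(s)$ in $L^1$. The necessary uniform integrability comes from second moments bounded by $\E\|\theta^\star\|^4/d^2$.

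The sample vector $B=m+\sqrt\eta C^{1/2}Z$ is handled the same way, with $Z$ independent:
\begin{itemize}
\item $d^{-1}\|C^{1/2}Z\|^2\to v(s)$, and cross terms vanish by conditional variance bounds. This gives $\|B\|^2/d\to\tau^2-v+\eta v$.
\item $d^{-1}(\theta^\star)\topr C^{1/2}Z\to0$. This gives the overlap $\tau^2-v$.
\end{itemize}
For $s>0$, the limiting denominators are positive, so the cosine limits $q(s)$ and $c_\eta(s)$ follow from the continuous mapping theorem. At $s=0$ with $\eta>0$, the value $c_\eta(0)=0$ is consistent with the convention for the direction. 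All bounds depend on the policy only through the trace law, so uniformity over $\Pi_{d,K}$ is inherited from \eqref{eq:trace-uniform}.
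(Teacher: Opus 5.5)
Your route is the paper's: a conjugate posterior, a transport coupling to an i.i.d.\ design combined with Marchenko--Pastur and monotonicity for the trace, and conditional Gaussian moments for the state. However, the final clause of the theorem is not proved in the case that matters, namely $s=0$ with $\eta=0$. There $B_{t_d}^{(0)}=m_{n_d}$, and both $d^{-1}(\theta^\star)\topr m_{n_d}$ and $d^{-1}\|m_{n_d}\|^2$ tend to zero, so the continuous-mapping step you use for $s>0$ yields nothing. At $n_d=0$ the direction is even the convention $e_1$. For $\eta>0$ your continuous-mapping argument does extend, since $d^{-1}\|B_{t_d}^{(\eta)}\|^2\to\eta\tau^2>0$, but saying the value is ``consistent with the convention'' does not prove it. The missing step is a bound valid for any unit $u$ measurable from $\F_{n_d}$ and independent seeds: $|(\theta^\star)\topr u|\le\|m_{n_d}\|+|(\theta^\star-m_{n_d})\topr u|$. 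The second term is conditionally $N(0,u\topr C_{n_d}u)$ with variance at most $\tau^2$. Since $d^{-1}\|m_{n_d}\|^2\to\tau^2-v(0)=0$ and $\|\theta^\star\|/\sqrt d\to\tau$, the cosine tends to zero. This is exactly the case Theorem~\ref{thm:regret} uses to obtain $\rho_0(0)=\mu_K\tau$.

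On the trace comparison, the mechanism in your second bullet would fail as stated. An $O(1/d)$ change per rank-one replacement, summed over $\lfloor\Gamma d\rfloor$ rounds, gives $O(\Gamma)$, not $o(1)$. The same happens if $\Delta_t$ is taken in the span of the candidates (e.g.\ by swapping the selected arm for a uniformly chosen one), since then $\|\Delta_t\|$ is of order one. What closes the argument is a bound proportional to the size of the perturbation.
\begin{enumerate}
\item Conditionally on the past and seed, the law $\nu$ of $\sqrt d\,x_t$ satisfies $\nu\le K\gamma_d$. Its relative entropy is therefore at most $\log K$, and Talagrand's inequality gives $\E\|X_N-\tilde X_N\|_F^2\le 2N\log K/d$. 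This is $O(1)$, not small.
\item The resolvent identity gives $d^{-1}|\tr C_t-\tr\tilde C_t|\le\tau^4\sigma^{-2}d^{-1}\|G_t-\tilde G_t\|_*$.
\item The nuclear norm satisfies $\|G_t-\tilde G_t\|_*\le2\|\tilde X_N\|_F\|X_N-\tilde X_N\|_F+\|X_N-\tilde X_N\|_F^2$ for $t\le N$.
\end{enumerate}
The cumulative Frobenius error is $O(1)$ while $\|\tilde X_N\|_F=O(\sqrt d)$. Hence the nuclear-norm discrepancy is $O(\sqrt d)=o(d)$ in $L^1$, uniformly in $t\le N$ and in the policy. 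You also need to build the couplings sequentially as measurable kernels, so that the $\tilde x_t$ are genuinely i.i.d.\ and the original experiment is preserved. Your sketch assumes this without constructing it.
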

In particular, the posterior-mean cosine is $q(s)$, whereas the cosine between the true parameter and an independent posterior draw is $q(s)^2$. The distinction reflects posterior sampling variability, not a different leading posterior covariance.

\subsection{Exact one-step identities}
\begin{prop}[Reward and direction]\label{prop:one-step}
Under Definition~\ref{def:model}, let a unit direction $u_t$ be chosen from the observed past and independent policy randomization before the current candidate pool, and select $A_t=\argmax_{a\leq K}x_{t,a}\topr u_t$. Then
\begin{eqn}\label{eq:direction-regret}
\E(r_t\mid\theta^\star,\F_{t-1},u_t)
=\frac{\mu_K}{\sqrt d}\{\|\theta^\star\|-(\theta^\star)\topr u_t\}.
\end{eqn}
For every $\pi\in\Pi_{d,K}$, including policies not determined by a preselected direction,
\begin{eqn}\label{eq:reward-bound}
\E(x_{t,A_t}\topr\theta^\star\mid\F_{t-1})
\leq\frac{\mu_K}{\sqrt d}\|m_{t-1}\|.
\end{eqn}
Equality in \eqref{eq:reward-bound} holds for posterior-mean greedy selection. Consequently,
\begin{eqn}\label{eq:greedy-exact}
\E r_t^0=\frac{\mu_K}{\sqrt d}
\{\E\|\theta^\star\|-\E\|m_{t-1}\|\},
\qquad
\E r_t^{\mathrm{rand}}=\frac{\mu_K}{\sqrt d}\E\|\theta^\star\|.
\end{eqn}
\end{prop}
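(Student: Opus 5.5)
The plan is to compute everything conditionally on the information available before the current pool is revealed, and then use rotation invariance of the fresh Gaussian candidates.

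\textbf{Direction identity.} Fix $(\theta^\star,\F_{t-1},u_t)$. The pool $x_{t,1},\ldots,x_{t,K}$ is independent of this conditioning, and its coordinates are i.i.d.\ $N(0,I_d/d)$. This holds because the policy seed and $u_t$ are generated before the pool, and $\theta^\star$ is independent of the candidates. Let $w=\theta^\star/\|\theta^\star\|$ (the case $\theta^\star=0$ is null) and set
\[
S_a=\sqrt d\,x_{t,a}\topr w, \qquad V_a=\sqrt d\,x_{t,a}\topr u_t .
\]
Conditionally, the pairs $(S_a,V_a)$ are i.i.d.\ over $a$, standard bivariate normal with correlation $\gamma=w\topr u_t$. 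Write $S_a=\gamma V_a+\sqrt{1-\gamma^2}W_a$, where $W_a$ is independent of $V$. Then
\[
\E\bigl(S_{A_t}\bigr)=\gamma\,\E\max_a V_a=\gamma\mu_K,
\]
because $A_t=\argmax_a V_a$ and the $W$ term has mean zero. Since $\E\max_a S_a=\mu_K$, multiplying by $\|\theta^\star\|/\sqrt d$ gives \eqref{eq:direction-regret}.

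\textbf{Reward bound for an arbitrary policy.} Condition on $\F_{t-1}$, the seed $U_t$, and the pool $x_t=(x_{t,a})_a$. By Theorem~\ref{thm:geometry}, $\cL(\theta^\star\mid\F_{t-1})=N(m_{t-1},C_{t-1})$. Because the seed and pool are independent of $\theta^\star$ given $\F_{t-1}$, the conditional law of $\theta^\star$ is unchanged by them. Hence
\[
\E\bigl(x_{t,A_t}\topr\theta^\star\mid\F_{t-1},U_t,x_t\bigr)=x_{t,A_t}\topr m_{t-1}\le\max_a x_{t,a}\topr m_{t-1}.
\]
Taking the expectation over the pool given $\F_{t-1}$, each $x_{t,a}\topr m_{t-1}$ is $N\bigl(0,\|m_{t-1}\|^2/d\bigr)$, independently over $a$. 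The maximum therefore has mean $\mu_K\|m_{t-1}\|/\sqrt d$, which is \eqref{eq:reward-bound}.

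Equality holds for $\pi_0$. When $m_{t-1}\neq0$, greedy selection attains the maximum exactly. When $m_{t-1}=0$, both sides are zero.

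\textbf{Consequences.} For the greedy formula, write
\[
\E r_t^0=\E\max_a x_{t,a}\topr\theta^\star-\E\, x_{t,A_t}\topr\theta^\star .
\]
The first term equals $\mu_K\E\|\theta^\star\|/\sqrt d$ by the same Gaussian-maximum computation with $\theta^\star$ in place of $m_{t-1}$. The second term is given by the equality case of \eqref{eq:reward-bound} after taking full expectation.

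For random selection, $A_t$ is independent of the pool and of $\theta^\star$ given the past. Hence $\E x_{t,A_t}\topr\theta^\star=0$, and $\E r_t^{\mathrm{rand}}=\mu_K\E\|\theta^\star\|/\sqrt d$.

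\textbf{Main obstacle.} The delicate point is measure-theoretic: justifying that conditioning on the seed and the current pool leaves the posterior $N(m_{t-1},C_{t-1})$ unchanged. This needs the conditional independence of $(U_t,x_t)$ from $\theta^\star$ given $\F_{t-1}$, which I would verify from the product structure in Definition~\ref{def:model} together with standard Borel regular conditional laws. The remaining steps are routine Gaussian calculations.
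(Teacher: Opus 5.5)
Your proof is correct and follows essentially the same approach as the paper. For the direction identity, the paper splits the vector $\sqrt d\,x_{t,a}=G_au_t+w_a$ along $u_t$ and its orthogonal complement; your scalar regression of $S_a$ on $V_a$ is this decomposition projected onto $\theta^\star$. The reward bound, the greedy equality (including the $m_{t-1}=0$ case), and the consequences are proved as you propose: condition on the past, seed, and pool, replace $\theta^\star$ by $m_{t-1}$, and integrate the Gaussian maximum.
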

The bound in \eqref{eq:reward-bound} holds under each policy's own posterior history. It does not compare the posterior means of different policies at finite dimension.

\subsection{Exact regret curves and optimality}
\begin{thm}[Expected and realized regret]\label{thm:regret}
Under Definitions~\ref{def:model}--\ref{def:limits}, for every fixed $\eta\geq0$ and every $s\geq0$,
\begin{eqn}\label{eq:instant-mean}
\E r_{\lfloor sd\rfloor+1}^\eta\longrightarrow\rho_\eta(s).
\end{eqn}
For every $\Gamma<\infty$,
\begin{eqn}\label{eq:realized-regret}
\E\sup_{0\leq\gamma\leq\Gamma}
\left|\frac{L_{\lfloor\gamma d\rfloor}^\eta}{d}-\mathcal R_\eta(\gamma)\right|
\longrightarrow0.
\end{eqn}
In particular,
$\sup_{0\leq\gamma\leq\Gamma}|d^{-1}R_{d,\lfloor\gamma d\rfloor}^\eta-\mathcal R_\eta(\gamma)|\to0$.
The same assertions hold for random selection with the functions in Definition~\ref{def:limits}.
For $s\geq0$,
\begin{eqn}\label{eq:special-curves}
\rho_0(s)=\mu_K\tau\{1-q(s)\},
\qquad
\rho_1(s)=\frac{\mu_K}{\tau}v(s)
=\mu_K\tau\{1-q(s)^2\}.
\end{eqn}
\end{thm}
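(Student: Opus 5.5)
The plan is to reduce everything to Proposition~\ref{prop:one-step} and Theorem~\ref{thm:geometry}. For $\pi_\eta$ the direction $u_t^{(\eta)}$ is built from $\F_{t-1}$ and the fresh $Z_t$ before the pool is drawn, so \eqref{eq:direction-regret} applies:
\[
\E r_t^\eta=\frac{\mu_K}{\sqrt d}\,\E\bigl\{\|\theta^\star\|-(\theta^\star)\topr u_t^{(\eta)}\bigr\}.
\]
We write $(\theta^\star)\topr u_t^{(\eta)}/\sqrt d=\|\theta^\star\|d^{-1/2}\cos(\theta^\star,B_t^{(\eta)})$ and take $t_d=\lfloor sd\rfloor+1$, $n_d=\lfloor sd\rfloor$.

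\textbf{Step 1: the instantaneous limit \eqref{eq:instant-mean}.} By \eqref{eq:state}, $d^{-1/2}\|\theta^\star\|\to\tau$ in $L^2$. By \eqref{eq:sample-state}, the cosine converges in probability to $c_\eta(s)$, with the convention at $s=0$ giving $c_\eta(0)=0$. Since the cosine is bounded by one and $d^{-1/2}\|\theta^\star\|$ is uniformly integrable, the product converges in $L^1$, and the limit is $\mu_K\tau\{1-c_\eta(s)\}=\rho_\eta(s)$. For random selection, $\E r_t^{\mathrm{rand}}=\mu_K d^{-1/2}\E\|\theta^\star\|\to\mu_K\tau$ follows directly from \eqref{eq:greedy-exact}. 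The special forms \eqref{eq:special-curves} are algebra. At $\eta=0$, $c_0=(\tau^2-v)/(\tau\sqrt{\tau^2-v})=q$. At $\eta=1$, $c_1=(\tau^2-v)/\tau^2=q^2$, so $\rho_1=\mu_K v/\tau$.

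\textbf{Step 2: the expected cumulative curve.} We write $d^{-1}R_{d,\lfloor\gamma d\rfloor}^\eta=\int_0^{\lfloor\gamma d\rfloor/d}\E r^\eta_{\lfloor sd\rfloor+1}\,ds$. This converges to $\mathcal R_\eta(\gamma)$ by dominated convergence, because $\E r_t\le 2\mu_K d^{-1/2}\E\|\theta^\star\|\le C$ and the integrand converges pointwise. Uniformity over $\gamma\in[0,\Gamma]$ follows because all these functions are Lipschitz with a common constant, so pointwise convergence on a finite grid suffices.

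\textbf{Step 3: the realized regret \eqref{eq:realized-regret}, the main obstacle.} Decompose $r_t=\bar r_t+\xi_t$ with $\bar r_t=\E(r_t\mid\theta^\star,\F_{t-1},u_t)$. The $\xi_t$ are martingale differences with respect to the filtration enlarged by $\theta^\star$ and $Z_t$. Conditionally on this filtration, $r_t$ is $d^{-1/2}\|\theta^\star\|$ times a difference of Gaussian maxima, so $\E\xi_t^2\le C$. Doob's inequality then gives
\[
\E\sup_{n\le\Gamma d}\Bigl|\sum_{t\le n}\xi_t\Bigr|\le C\sqrt{\Gamma d}=o(d).
\]
For the predictable part, $\bar r_t=\mu_K d^{-1/2}\|\theta^\star\|\{1-\cos(\theta^\star,B_t)\}$, and we need
\[
\E\sup_\gamma\Bigl|d^{-1}\sum_{t\le\gamma d}\bar r_t-\mathcal R_\eta(\gamma)\Bigr|\to0.
\]
We bound this by $d^{-1}\sum_{t\le\Gamma d}\E|\bar r_t-\rho_\eta((t-1)/d)|$ plus a Riemann-sum error. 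It then suffices to have $\E|\bar r_{t_d}-\rho_\eta(s)|\to0$ for each $s$, together with a uniform bound that permits dominated convergence in $s$. That local convergence is again the $L^1$ statement from Step 1, now applied to the product itself rather than its mean, and it holds because the cosine converges in probability and $d^{-1/2}\|\theta^\star\|$ converges in $L^1$.

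The delicate point is near $s=0$ and the Riemann-sum approximation. There we use boundedness, since $0\le\bar r_t\le2\mu_K d^{-1/2}\|\theta^\star\|$, and continuity of $\rho_\eta$ on $[0,\Gamma]$, which holds because $v$ is continuous and $\tau^2+(\eta-1)v\ge\min(1,\eta)\tau^2+\ldots>0$ for $s>0$. For $\eta=0$ at $s=0$ we need only $c_0\to0$. Random selection is handled identically with $\bar r_t=\mu_K d^{-1/2}\|\theta^\star\|$.
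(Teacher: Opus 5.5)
Your proposal is correct and follows essentially the same route as the paper: the conditional mean $\frac{\mu_K}{\sqrt d}\{\|\theta^\star\|-(\theta^\star)\topr u_t^{(\eta)}\}$ from Proposition~\ref{prop:one-step} converges in $L^1$ by Theorem~\ref{thm:geometry} together with the domination by $2\mu_K\|\theta^\star\|/\sqrt d$, Doob's inequality with a uniform second-moment bound controls the martingale remainder, and dominated convergence in the time variable handles the accumulated predictable part. Your separate Step~2 is redundant, since it follows from Step~3 by taking expectations. Its ``common Lipschitz constant'' claim is also slightly imprecise for step functions and should read ``Lipschitz up to jumps of size $O(1/d)$'', but neither point affects correctness.
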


\begin{thm}[Optimal first-order Bayes regret]\label{thm:optimal}
Under Definitions~\ref{def:model}--\ref{def:limits}, for every fixed $\Gamma>0$,
\begin{eqn}\label{eq:optimal}
\lim_{d\to\infty}\frac{R_{d,\lfloor\Gamma d\rfloor}^{\star}}{d}
=\mathcal R_0(\Gamma),
\qquad
\frac{R_{d,\lfloor\Gamma d\rfloor}^{0}-R_{d,\lfloor\Gamma d\rfloor}^{\star}}{d}
\longrightarrow0.
\end{eqn}
\end{thm}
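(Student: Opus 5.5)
The proof combines a policy-uniform lower bound with the exact upper bound already available for greedy selection. Since $\pi_0\in\Pi_{d,K}$, we have $R^\star_{d,T}\le R^0_{d,T}$. Theorem~\ref{thm:regret} with $\eta=0$ gives $d^{-1}R^0_{d,\lfloor\Gamma d\rfloor}\to\mathcal R_0(\Gamma)$, hence $\limsup d^{-1}R^\star_{d,\lfloor\Gamma d\rfloor}\le\mathcal R_0(\Gamma)$. Both claims in \eqref{eq:optimal} then follow once we show
\[
\liminf_{d\to\infty}\inf_{\pi\in\Pi_{d,K}}\frac{R^\pi_{d,\lfloor\Gamma d\rfloor}}{d}\ \ge\ \mathcal R_0(\Gamma).
\]

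For the lower bound, fix an arbitrary $\pi\in\Pi_{d,K}$ and write $r_t^\pi$ as the benchmark term minus the selected reward. The benchmark term $\max_a x_{t,a}\topr\theta^\star$ does not depend on the policy. Because the current pool is independent of $\theta^\star$ and of the past, its expectation equals $\mu_K d^{-1/2}\E\|\theta^\star\|$. For the selected reward, take conditional expectations and apply \eqref{eq:reward-bound} of Proposition~\ref{prop:one-step}, which holds for every admissible policy under that policy's own posterior history. This yields
\[
R^\pi_{d,T}\ \ge\ \frac{\mu_K}{\sqrt d}\sum_{t=1}^T\bigl\{\E\|\theta^\star\|-\E^\pi\|m_{t-1}\|\bigr\}.
\]
We then need an upper bound on $d^{-1/2}\E^\pi\|m_{t-1}\|$ that is uniform in $\pi$. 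By Cauchy--Schwarz, $d^{-1/2}\E\|m_{t-1}\|\le(\E d^{-1}\|m_{t-1}\|^2)^{1/2}$. The Gaussian posterior identity in Theorem~\ref{thm:geometry} (conditional mean $m_t$, conditional covariance $C_t$) gives $\E\|m_{t-1}\|^2=\E\|\theta^\star\|^2-\E\tr C_{t-1}=d\tau^2-\E\tr C_{t-1}$ exactly. Therefore
\[
\frac{1}{\sqrt d}\E^\pi\|m_{t-1}\|\le\bigl(\tau^2-d^{-1}\E^\pi\tr C_{t-1}\bigr)^{1/2}.
\]
The policy-uniform trace limit \eqref{eq:trace-uniform} gives $\sup_\pi\sup_{t\le\Gamma d+1}|d^{-1}\E^\pi\tr C_{t-1}-v((t-1)/d)|\to0$. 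Since $x\mapsto\sqrt{x}$ is uniformly continuous on $[0,\tau^2]$, the right-hand side is at most $\tau q((t-1)/d)+o(1)$, uniformly in $\pi$ and $t$. Together with $d^{-1/2}\E\|\theta^\star\|\to\tau$, this shows that each summand is at least $\mu_K\tau\{1-q((t-1)/d)\}-o(1)=\rho_0((t-1)/d)-o(1)$, uniformly.

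Dividing by $d$ and summing over $t\le\lfloor\Gamma d\rfloor$ gives a Riemann sum for $\int_0^\Gamma\rho_0$. The function $\rho_0$ is continuous and bounded on $[0,\Gamma]$, since $v$ is continuous by its explicit formula, so the Riemann sum converges to $\mathcal R_0(\Gamma)$. The $o(1)$ errors contribute at most $\Gamma\cdot o(1)$. Taking the infimum over $\pi$ gives the desired $\liminf$.

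The main obstacle is uniformity over $\pi$, which must hold even for policies that depend on $d$ and $T$. It is supplied entirely by \eqref{eq:trace-uniform}, so the key step is routing the bound through $\E\tr C_{t-1}$, which is controlled uniformly, rather than through policy-specific cosines. One point to check is that \eqref{eq:reward-bound} really holds for policies using the entire pool nonlinearly. I take it as given from Proposition~\ref{prop:one-step}: conditionally on $\F_{t-1}$ and the pool, the expected reward of the chosen arm is $x_{t,A_t}\topr m_{t-1}\le\max_a x_{t,a}\topr m_{t-1}$, and this maximum has expectation $\mu_K\|m_{t-1}\|/\sqrt d$.
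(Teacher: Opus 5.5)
Your proof is correct and follows the paper's argument. The upper bound comes from the greedy limit in Theorem~\ref{thm:regret}, and the policy-uniform lower bound comes from \eqref{eq:reward-bound} combined with the uniform trace limit \eqref{eq:trace-uniform} and a Riemann sum. The one minor difference is that you bound $d^{-1/2}\E\|m_{t-1}\|$ only from above, using Jensen and the exact identity $\E\|m_{t-1}\|^2=d\tau^2-\E\tr C_{t-1}$, whereas the paper obtains two-sided $L^1$ control through \eqref{eq:mean-norm-bound} and $|\sqrt a-\sqrt b|\le\sqrt{|a-b|}$; since only the one-sided bound is needed, your version is slightly more economical.
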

The infimum in \eqref{eq:optimal} ranges over the full causal class of Definition~\ref{def:model}. The theorem asserts asymptotic Bayes optimality to first order, not finite-dimensional optimality or frequentist minimaxity.

\begin{thm}[The cost of posterior randomization]\label{thm:comparison}
Under Definition~\ref{def:limits}, for $s>0$ and $\eta\geq0$,
\begin{eqn}\label{eq:temperature-monotonicity}
\frac{\partial\rho_\eta(s)}{\partial\eta}
=\frac{\mu_K\{\tau^2-v(s)\}v(s)}
{2\{\tau^2+(\eta-1)v(s)\}^{3/2}}>0.
\end{eqn}
For every $\Gamma>0$,
\[
0<\mathcal R_0(\Gamma)<\mathcal R_1(\Gamma)
<\mathcal R_{\mathrm{rand}}(\Gamma),
\qquad
\mathcal R_0(\Gamma)<\mathcal R_1(\Gamma)<2\mathcal R_0(\Gamma).
\]
The instantaneous ratio and cumulative excess are
\begin{eqn}\label{eq:price}
\frac{\rho_1(s)}{\rho_0(s)}=1+q(s)\quad(s\geq0),
\qquad
\mathcal R_1(\Gamma)-\mathcal R_0(\Gamma)
=\mu_K\tau\int_0^\Gamma q(s)\{1-q(s)\}\,ds.
\end{eqn}
\end{thm}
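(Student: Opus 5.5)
The plan is to treat Theorem~\ref{thm:comparison} as a deterministic calculus statement about the functions in Definition~\ref{def:limits}. It uses only the elementary properties of $v$ from Theorem~\ref{thm:geometry}: $v(0)=\tau^2$, $v$ strictly decreasing, and $0<v(s)<\tau^2$ for $s>0$. The last property follows from the explicit formula, since the square root strictly exceeds $|\sigma^2+\tau^2(s-1)|$, and from monotonicity.

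\textbf{Derivative in $\eta$.} Write $w=\tau^2-v(s)\in(0,\tau^2)$. Then $c_\eta(s)=w\,\tau^{-1}\{\tau^2+(\eta-1)v\}^{-1/2}$. For $\eta\ge0$ the denominator satisfies $\tau^2+(\eta-1)v\ge\tau^2-v>0$, so it is well defined. Differentiating $\rho_\eta=\mu_K\tau(1-c_\eta)$ in $\eta$ gives
\[
\partial_\eta\rho_\eta=\mu_K\, w\, \tfrac12 v\{\tau^2+(\eta-1)v\}^{-3/2},
\]
which is exactly \eqref{eq:temperature-monotonicity}. It is strictly positive because $w,v>0$.

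\textbf{Special cases and the ratio.} Note $q^2=w/\tau^2$. For $\eta=0$ the denominator is $\tau\sqrt{w}=\tau^2 q$, so $c_0=q$. For $\eta=1$ we get $c_1=w/\tau^2=q^2$. Hence $\rho_0=\mu_K\tau(1-q)$ and $\rho_1=\mu_K\tau(1-q^2)$, matching \eqref{eq:special-curves}. For $s>0$ we have $q<1$, so $\rho_0>0$ and $\rho_1/\rho_0=1+q$. At $s=0$ both curves equal $\mu_K\tau$ and $q(0)=0$, so the ratio identity also holds there. The excess is $\rho_1-\rho_0=\mu_K\tau\{(1-q^2)-(1-q)\}=\mu_K\tau\,q(1-q)$, and integrating over $[0,\Gamma]$ gives \eqref{eq:price}.

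\textbf{Cumulative inequalities.} Fix $\Gamma>0$. For $s\in(0,\Gamma]$ we have $0<q(s)<1$, and $q$ is continuous. This gives the following pointwise inequalities on $(0,\Gamma]$:
\begin{itemize}
\item $\rho_0>0$, so $\mathcal R_0(\Gamma)>0$;
\item $q(1-q)>0$, so $\mathcal R_1>\mathcal R_0$;
\item $\rho_1=\mu_K\tau(1-q^2)<\mu_K\tau=\rho_{\mathrm{rand}}$, so $\mathcal R_1<\mathcal R_{\mathrm{rand}}$;
\item $\rho_1=(1+q)\rho_0<2\rho_0$, so $\mathcal R_1<2\mathcal R_0$.
\end{itemize}
Each strict pointwise inequality holds on a set of positive measure and the integrands are continuous, so the strict inequalities pass to the integrals. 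Alternatively, $\mathcal R_1<\mathcal R_{\mathrm{rand}}$ follows from the $\eta$-monotonicity, since $c_\eta\to0$ only as $\eta\to\infty$.

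\textbf{Main obstacle.} The only delicate step is checking that $0<v(s)<\tau^2$ strictly for $s>0$, together with the endpoint conventions at $s=0$. Once that is in place, everything else is direct algebra.
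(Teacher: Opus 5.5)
Your proposal is correct and follows the paper's proof essentially step by step. You differentiate $c_\eta$ in $\eta$, identify $c_0=q$ and $c_1=q^2$, and integrate the strict pointwise inequalities $0<\rho_0<\rho_1=(1+q)\rho_0<\mu_K\tau$ over $(0,\Gamma]$. The only thing to add is that strict positivity of the derivative (and of $\rho_0$) also needs $\mu_K>0$; this holds because $K\geq2$, and the paper states it explicitly.
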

The leading regret difference is positive even though the policies have the same leading posterior uncertainty. Theorem~\ref{thm:optimal} shows that this is a decision loss relative to the optimal value of the specified asymptotic experiment.

\subsection{Single-round fluctuations and best-arm agreement}
\begin{thm}[Limiting Gaussian decision loss]\label{thm:distribution}
Under Definitions~\ref{def:model}--\ref{def:decision}, for each fixed $s\geq0$ and $\eta\geq0$,
\begin{eqn}\label{eq:loss-distribution}
r_{\lfloor sd\rfloor+1}^\eta\dto\tau D_K\{c_\eta(s)\},
\qquad
\PP\left(A_{\lfloor sd\rfloor+1}
=\argmax_{a\leq K}x_{\lfloor sd\rfloor+1,a}\topr\theta^\star\right)
\longrightarrow p_K\{c_\eta(s)\}.
\end{eqn}
For $0\leq c<1$,
\[
\E D_K(c)=\mu_K(1-c),\qquad
\PP\{D_K(c)=0\}=p_K(c)\in(0,1),\qquad p_K(0)=1/K.
\]
In particular, for every finite $s$ and every $\sigma>0$, the limiting distribution in \eqref{eq:loss-distribution} is nondegenerate.
\end{thm}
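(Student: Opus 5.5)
The plan is to condition on the past and the policy direction, and reduce the current round to a fixed-$K$ Gaussian computation whose only input is the cosine from Theorem~\ref{thm:geometry}. Write $t=t_d=\lfloor sd\rfloor+1$ and $u=u_t^{(\eta)}$. The candidates $x_{t,a}$ are independent of $(\theta^\star,\F_{t-1},Z_t)$. Conditionally on $(\theta^\star,u)$ with $\theta^\star\neq0$, put
\[
\tilde G_a=\sqrt d\,x_{t,a}\topr u,\qquad W_a=\sqrt d\,x_{t,a}\topr\theta^\star/\|\theta^\star\|.
\]
Then the pairs $(\tilde G_a,W_a)$ are i.i.d.\ over $a$, each standard bivariate normal with correlation $\hat c=(\theta^\star)\topr u/\|\theta^\star\|$. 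Hence $W_a=\hat c\,\tilde G_a+\sqrt{1-\hat c^2}\,H_a$ with $H_a$ independent standard normals. The selected arm is $J=\argmax_a\tilde G_a$, with the smallest-index tie rule irrelevant almost surely. Therefore
\[
r_t^\eta=\frac{\|\theta^\star\|}{\sqrt d}\Bigl(\max_a W_a-W_J\Bigr)
\]
has conditional law equal to $(\|\theta^\star\|/\sqrt d)\,D_K(\hat c)$, provided $\hat c<1$. Likewise, the conditional probability of selecting the truly best arm equals $\PP\{D_K(\hat c)=0\}$.

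Theorem~\ref{thm:geometry} gives $\|\theta^\star\|/\sqrt d\to\tau$ and $\hat c\to c_\eta(s)$ in probability; at $s=0$ this uses the interpretation of the direction in \eqref{eq:temperature}. Note that $c_\eta(s)<1$ for every finite $s$ and $\sigma>0$: indeed $v(s)>0$, so $c_\eta(s)^2<1$ reduces to $(\tau^2-v)^2<\tau^2(\tau^2+(\eta-1)v)$, which holds because $v<\tau^2$ when $s>0$. The map $c\mapsto\mathcal L(D_K(c))$ is weakly continuous, since $D_K(c)$ is a continuous function of $(G,H,c)$ off the null set of ties in $G$. So for a bounded Lipschitz $f$, the quantity $\E[f(r_t^\eta)\mid\theta^\star,u]$ is a continuous function of $(\|\theta^\star\|/\sqrt d,\hat c)$ evaluated at a point converging in probability. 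Bounded convergence then gives $r_t^\eta\dto\tau D_K\{c_\eta(s)\}$.

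The agreement probability requires continuity of $p(c)=\PP\{D_K(c)=0\}$ on $[-1,1)$. This follows from dominated convergence: the event $\{D_K(c)=0\}$ equals $\{\argmax_aU_a=\argmax_aG_a\}$, whose indicator is continuous in $c$ off a null set.

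For the identities, linearity together with $\E U_J=c\,\E G_J+\sqrt{1-c^2}\,\E H_J=c\mu_K$ (because $H$ is independent of $J$) gives $\E D_K(c)=\mu_K-c\mu_K$. For the formula, note $\PP\{D_K=0\}=K\,\PP(G_1,U_1\text{ both maximal})$ by exchangeability. Conditioning on $(G_1,U_1)=(u,v)$, the other $K-1$ pairs are independent, and each lies below $(u,v)$ in both coordinates with probability $\Phi_2(u,v;c)$. Integrating against $\phi_2$ gives \eqref{eq:agreement}. At $c=0$, $U$ is independent of $G$, so the probability is $1/K$. For $p_K(c)<1$, note that $|c|<1$ gives $(G,U)$ a full-support density on $\R^{2K}$, so the event $\{G_1>G_2,\ U_2>U_1\}$ has positive probability. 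Also $p_K(c)>0$, since by the density again the set $\{G_1>G_a,\ U_1>U_a\ \forall a\ge2\}$ is open and nonempty. Therefore $D_K(c)$ has an atom of mass in $(0,1)$ at zero and is nondegenerate, and scaling by $\tau>0$ preserves this.

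The main obstacle is making the conditional reduction rigorous: $u$ may be $e_1$ on $\{B=0\}$, $\theta^\star$ may be zero, and $\hat c$ may equal $1$. I would handle these by restricting to the event $\{\theta^\star\neq0,\ |\hat c-c_\eta(s)|<\delta\}$, whose probability tends to one by Theorem~\ref{thm:geometry}, and by checking that $D_K$ and the agreement indicator remain measurable and continuous in $c$ uniformly on $[-1,1-\delta']$.
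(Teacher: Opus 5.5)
Your proposal is correct and follows essentially the paper's route. You condition on $(\theta^\star,u)$ to obtain i.i.d.\ standard bivariate Gaussian score--reward pairs with correlation $(\theta^\star)\topr u/\|\theta^\star\|$, then pass the cosine and norm limits of Theorem~\ref{thm:geometry} through maps that are continuous off tie sets; your bounded-Lipschitz, conditional-law formulation is only a repackaging of the paper's continuous-mapping step. The exchangeability, mean, and full-support arguments are also the same as the paper's.

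There is one small slip. For $K\ge3$ the event $\{G_1>G_2,\ U_2>U_1\}$ does not force $D_K(c)>0$, because a third arm can be largest in both coordinates. Use instead the open nonempty set $\{G_1>\max_{a\ge2}G_a,\ U_2>U_1\}$; on it $J=1$ and $U_J<\max_aU_a$.
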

The expected limit in \eqref{eq:instant-mean} is therefore not a probability limit of the individual regret to a deterministic constant.

\section{Closed forms and horizon--noise asymptotics}\label{sec:scaling}
This section evaluates the regret integrals and specifies how the limiting curves behave when their horizon or noise arguments vary.

\begin{thm}[Closed-form cumulative regret]\label{thm:closed}
Under Definition~\ref{def:limits}, for every $\Gamma\geq0$,
\begin{eqn}\label{eq:TS-closed}
\mathcal R_1(\Gamma)=\frac{\mu_K}{\tau}
\left\{\sigma^2\log\frac{\tau^2}{v(\Gamma)}
+\frac{\tau^4-v(\Gamma)^2}{2\tau^2}\right\},
\end{eqn}
and
\begin{eqn}\label{eq:greedy-closed}
\mathcal R_0(\Gamma)=\mu_K\tau
\left\{q(\Gamma)^2-\frac23q(\Gamma)^3
+\frac{\sigma^2}{\tau^2}
\left[\operatorname{arctanh}q(\Gamma)
-\frac{q(\Gamma)}{1+q(\Gamma)}\right]\right\}.
\end{eqn}
Here $\operatorname{arctanh}(q)=\tfrac12\log\{(1+q)/(1-q)\}$ for $0\leq q<1$.
\end{thm}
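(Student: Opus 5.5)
The plan is to compute both integrals in \eqref{eq:curves} by changing variables from the proportional time $s$ to the posterior variance $v$, and then, for the greedy curve, to the overlap $q$. Theorem~\ref{thm:geometry} gives what makes this legitimate: $v$ is smooth and strictly decreasing with $v(0)=\tau^2$. Hence $s\mapsto v(s)$ is a $C^1$ bijection from $[0,\Gamma]$ onto $[v(\Gamma),\tau^2]$.

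\textbf{Inverting the variance curve.} From \eqref{eq:vq}, $v$ is the positive root of
\[
v^2+v\{\sigma^2+\tau^2(s-1)\}-\sigma^2\tau^2=0 .
\]
Solving this for $s$ gives $s=1+\sigma^2/v-v/\tau^2-\sigma^2/\tau^2$. Therefore
\[
\frac{ds}{dv}=-\Bigl(\frac{\sigma^2}{v^2}+\frac1{\tau^2}\Bigr).
\]
This agrees with \eqref{eq:ode} after substituting the quadratic, which gives a consistency check.

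\textbf{Thompson sampling.} By \eqref{eq:special-curves}, $\rho_1(s)=\mu_K v(s)/\tau$. The substitution gives
\[
\mathcal R_1(\Gamma)=\frac{\mu_K}{\tau}\int_{v(\Gamma)}^{\tau^2}v\Bigl(\frac{\sigma^2}{v^2}+\frac1{\tau^2}\Bigr)dv .
\]
Integrating term by term yields $\sigma^2\log\{\tau^2/v(\Gamma)\}+\{\tau^4-v(\Gamma)^2\}/(2\tau^2)$, which is \eqref{eq:TS-closed}.

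\textbf{Greedy selection.} Here $\rho_0=\mu_K\tau(1-q)$. We write $v=\tau^2(1-q^2)$, so $dv=-2\tau^2q\,dq$, and $q$ increases from $q(0)=0$ to $q(\Gamma)<1$. The measure $ds$ then becomes
\[
ds=2q\Bigl\{1+\frac{\sigma^2}{\tau^2(1-q^2)^2}\Bigr\}dq .
\]
Multiplying by $1-q$ splits the integral into two pieces. The first is $\int_0^{Q}2q(1-q)\,dq=Q^2-\tfrac23Q^3$, with $Q=q(\Gamma)$. The second is $(\sigma^2/\tau^2)\int_0^{Q}2q/\{(1-q)(1+q)^2\}\,dq$.

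For the second piece we use the partial fractions
\[
\frac{2q}{(1-q)(1+q)^2}=\frac{1/2}{1-q}+\frac{1/2}{1+q}-\frac{1}{(1+q)^2}.
\]
Its antiderivative is $\operatorname{arctanh}q+1/(1+q)$. Evaluating between $0$ and $Q$ gives $\operatorname{arctanh}Q-Q/(1+Q)$, which completes \eqref{eq:greedy-closed}.

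\textbf{Edge case and main obstacle.} The case $\Gamma=0$ is immediate, since both sides vanish there. There is no genuine obstacle. The only care needed is bookkeeping: justifying the monotone substitutions, including the endpoint $s=0$ where $q$ is only continuous, and getting the partial-fraction constants right. We fix those constants by evaluating at $q=1$, $q=-1$, and $q=0$.
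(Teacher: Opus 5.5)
Your proposal is correct and follows the paper's proof essentially step for step. It uses the same inversion $s=1-\sigma^2/\tau^2+\sigma^2/v-v/\tau^2$, the same substitution $v=\tau^2(1-q^2)$ for the greedy curve, and the same split into $q^2-\tfrac23q^3$ plus a $\sigma^2/\tau^2$ term. The one cosmetic difference is that you obtain the antiderivative $\operatorname{arctanh}q+1/(1+q)$ by partial fractions, with the correct constants $\tfrac12,\tfrac12,-1$, whereas the paper simply checks the derivative identity for $\operatorname{arctanh}q-q/(1+q)$.
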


\begin{thm}[Short and long proportional horizons]\label{thm:horizon}
For fixed $\tau,\sigma>0$, as $\Gamma\downarrow0$,
\begin{eqn}\label{eq:small-horizon}
\begin{split}
\mathcal R_0(\Gamma)
&=\mu_K\tau\Gamma
-\frac{2\mu_K\tau^2}{3\sqrt{\tau^2+\sigma^2}}\Gamma^{3/2}
+O(\Gamma^{5/2}),\\
\mathcal R_1(\Gamma)
&=\mu_K\tau\Gamma
-\frac{\mu_K\tau^3}{2(\tau^2+\sigma^2)}\Gamma^2
+O(\Gamma^3).
\end{split}
\end{eqn}
For every fixed $\eta\geq0$, as $\Gamma\to\infty$,
\begin{eqn}\label{eq:large-horizon}
\mathcal R_\eta(\Gamma)
=\frac{\mu_K(1+\eta)\sigma^2}{2\tau}\log\Gamma+O(1),
\qquad
\frac{\mathcal R_\eta(\Gamma)}{\mathcal R_0(\Gamma)}\longrightarrow1+\eta.
\end{eqn}
The limits in this theorem concern the deterministic curves obtained after the proportional dimension limit. They do not assert a joint approximation for arbitrary sequences $T/d\to\infty$.
\end{thm}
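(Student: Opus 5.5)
The plan is to work directly with the deterministic curves of Definition~\ref{def:limits} and the closed forms of Theorem~\ref{thm:closed}. Everything reduces to the behaviour of $v$ at the two ends of $[0,\infty)$.

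\textbf{Short horizon.} Expand $v$ near $s=0$. From \eqref{eq:ode}, $v(0)=\tau^2$ and $v'(0)=-\tau^4/(\tau^2+\sigma^2)$. Hence $1-v(s)/\tau^2=\tau^2 s/(\tau^2+\sigma^2)+O(s^2)$, and $v$ is analytic at $0$ because the discriminant in \eqref{eq:vq} equals $(\sigma^2+\tau^2)^2>0$ there. For Thompson sampling, \eqref{eq:special-curves} gives $\rho_1(s)=\mu_K v(s)/\tau=\mu_K\tau-\mu_K\tau^3 s/(\tau^2+\sigma^2)+O(s^2)$. Integrating yields the stated $\Gamma^2$ term with an $O(\Gamma^3)$ remainder. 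For greedy selection, $q(s)^2=as+bs^2+O(s^3)$ with $a=\tau^2/(\tau^2+\sigma^2)$, so $q(s)=\sqrt{a}\,s^{1/2}\{1+(b/2a)s+O(s^2)\}$. Then
\[
\rho_0(s)=\mu_K\tau\bigl\{1-\sqrt a\,s^{1/2}+O(s^{3/2})\bigr\},
\]
and integrating gives $\mu_K\tau\Gamma-\tfrac23\mu_K\tau\sqrt a\,\Gamma^{3/2}+O(\Gamma^{5/2})$. Since $\tau\sqrt a=\tau^2/\sqrt{\tau^2+\sigma^2}$, this matches \eqref{eq:small-horizon}.

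\textbf{Long horizon.} Rationalize \eqref{eq:vq}: $v(s)=2\sigma^2\tau^2/\{\sqrt{A^2+4\sigma^2\tau^2}+A\}$ with $A=\sigma^2+\tau^2(s-1)$. Hence $v(s)=\sigma^2/s+O(s^{-2})$ as $s\to\infty$. Next expand $\rho_\eta$ in $v$. Write $c_\eta=(1-x)(1+(\eta-1)x)^{-1/2}$ with $x=v/\tau^2\to0$. Then
\[
c_\eta=1-\tfrac{1+\eta}{2}\,x+O(x^2),
\qquad
\rho_\eta(s)=\frac{\mu_K(1+\eta)}{2\tau}\,v(s)+O(v^2)=\frac{\mu_K(1+\eta)\sigma^2}{2\tau s}+O(s^{-2}).
\]
Now split $\mathcal R_\eta(\Gamma)=\int_0^1\rho_\eta+\int_1^\Gamma\rho_\eta$. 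The first piece is bounded since $0\le\rho_\eta\le\mu_K\tau$; positivity uses $c_\eta\in[0,1]$, which holds because $\tau^2+(\eta-1)v\ge(\tau^2-v)^2/\tau^2$. The second piece is $\tfrac{\mu_K(1+\eta)\sigma^2}{2\tau}\log\Gamma+O(1)$, because the $O(s^{-2})$ remainder is integrable on $[1,\infty)$. This gives \eqref{eq:large-horizon}. The ratio limit follows by dividing: $\mathcal R_0(\Gamma)\sim\tfrac{\mu_K\sigma^2}{2\tau}\log\Gamma\to\infty$ because $\sigma>0$.

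As a consistency check, the Thompson case can be confirmed from \eqref{eq:TS-closed}, since $\log(\tau^2/v(\Gamma))=\log\Gamma+O(1)$. The greedy case can be confirmed from \eqref{eq:greedy-closed}, since $\operatorname{arctanh}q=\tfrac12\log\{2/(1-q)\}+o(1)$ and $1-q\sim v/(2\tau^2)$.

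The main obstacle is the \emph{uniformity of the $O(\cdot)$ remainders}, especially for general $\eta$ in the long-horizon case. The expansion of $c_\eta$ in $x$ must be uniform for small $x$, which requires $1+(\eta-1)x$ to stay bounded away from zero. This holds once $x\le 1/2$, which is true for $s$ large. The finitely many remaining $s$ are absorbed into the bounded integral over $[0,s_0]$. I would handle this by fixing $s_0$ with $v(s)\le\tau^2/2$ for $s\ge s_0$ and using Taylor's theorem with an explicit remainder on that range.
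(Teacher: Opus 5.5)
Your proposal is correct and follows essentially the same route as the paper: Taylor-expand $v$ at $s=0$ (via \eqref{eq:ode} and smoothness of \eqref{eq:vq}) and at $s=\infty$, expand $\rho_\eta$ around $v=0$, integrate, and absorb a fixed initial segment into the $O(1)$ term before taking the ratio. The only differences are minor: you obtain $v(s)=\sigma^2/s+O(s^{-2})$ by rationalizing \eqref{eq:vq} rather than from the inverse relation \eqref{eq:time-from-v}, and you make explicit the bound $0\le\rho_\eta\le\mu_K\tau$ and the uniformity of the Taylor remainder, which the paper leaves implicit.
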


\begin{thm}[Sequential vanishing-noise limit]\label{thm:noise}
Fix $\tau>0$ and $\Gamma\geq0$, and display the dependence on $\sigma$ by writing $v_\sigma$ and $\mathcal R_{\eta,\sigma}$. After the dimension limit in Theorem~\ref{thm:regret}, let $\sigma\downarrow0$. Then, for every $s\geq0$,
\begin{eqn}\label{eq:noise-state}
v_\sigma(s)\longrightarrow\tau^2(1-s)_+,
\quad
\rho_{1,\sigma}(s)\longrightarrow\mu_K\tau(1-s)_+,
\quad
\rho_{0,\sigma}(s)\longrightarrow\mu_K\tau\{1-\sqrt{\min(s,1)}\}.
\end{eqn}
Writing $h=\min(\Gamma,1)$ within this statement,
\begin{eqn}\label{eq:noise-regret}
\mathcal R_{1,\sigma}(\Gamma)\longrightarrow\mu_K\tau(h-h^2/2),
\qquad
\mathcal R_{0,\sigma}(\Gamma)\longrightarrow\mu_K\tau\{h-(2/3)h^{3/2}\}.
\end{eqn}
For $\Gamma\geq1$, the limits are $\mu_K\tau/2$ and $\mu_K\tau/3$, respectively. In particular,
\[
\lim_{\Gamma\to\infty}\lim_{\sigma\downarrow0}
\frac{\mathcal R_{1,\sigma}(\Gamma)}{\mathcal R_{0,\sigma}(\Gamma)}=\frac32,
\qquad
\lim_{\sigma\downarrow0}\lim_{\Gamma\to\infty}
\frac{\mathcal R_{1,\sigma}(\Gamma)}{\mathcal R_{0,\sigma}(\Gamma)}=2.
\]
\end{thm}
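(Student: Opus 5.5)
The plan is to work entirely with the explicit formulas of Definition~\ref{def:limits} and the identities \eqref{eq:special-curves} from Theorem~\ref{thm:regret}. After the dimension limit, everything is a statement about deterministic functions of $(s,\sigma)$, so no probabilistic input is needed beyond what those results already supply. I would first compute the pointwise limit of $v_\sigma(s)$, then transfer it to $\rho_{1,\sigma}$ and $\rho_{0,\sigma}$, integrate by dominated convergence, and finally read off the two iterated ratio limits.

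\textbf{Step 1 (state limit).} Write $a_\sigma(s)=\sigma^2+\tau^2(s-1)$, so that $v_\sigma(s)=\tfrac12\{\sqrt{a_\sigma(s)^2+4\sigma^2\tau^2}-a_\sigma(s)\}$. As $\sigma\downarrow0$ we have $a_\sigma(s)\to\tau^2(s-1)$ and $4\sigma^2\tau^2\to0$. By continuity of the square root,
\[
v_\sigma(s)\to\tfrac12\tau^2\{|s-1|-(s-1)\}=\tau^2(1-s)_+ .
\]
The boundary case $s=1$ needs no separate treatment: there $v_\sigma(1)=\tfrac12\{\sqrt{\sigma^4+4\sigma^2\tau^2}-\sigma^2\}\to0$.

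\textbf{Step 2 (instantaneous curves).} By \eqref{eq:special-curves}, $\rho_{1,\sigma}(s)=\mu_K v_\sigma(s)/\tau$, which tends to $\mu_K\tau(1-s)_+$. Also $q_\sigma(s)^2=1-v_\sigma(s)/\tau^2\to1-(1-s)_+=\min(s,1)$. Since $x\mapsto\sqrt{x}$ is continuous on $[0,\infty)$, this gives $q_\sigma(s)\to\sqrt{\min(s,1)}$ and hence $\rho_{0,\sigma}(s)=\mu_K\tau\{1-q_\sigma(s)\}\to\mu_K\tau\{1-\sqrt{\min(s,1)}\}$. At $s=0$ all three quantities are constant in $\sigma$ ($v=\tau^2$, $\rho=\mu_K\tau$), consistent with the continuous extensions.

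\textbf{Step 3 (cumulative curves).} By Theorem~\ref{thm:geometry}, or directly from the formula, $0<v_\sigma\le\tau^2$. Hence $q_\sigma\in[0,1)$ and $0\le\rho_{\eta,\sigma}(s)\le\mu_K\tau$ for $\eta\in\{0,1\}$, uniformly in $\sigma$. Dominated convergence on $[0,\Gamma]$ therefore gives \eqref{eq:noise-regret}, with the limits computed as follows:
\[
\int_0^h(1-s)\,ds=h-\tfrac{h^2}{2},\qquad \int_0^h(1-\sqrt s)\,ds=h-\tfrac23h^{3/2}.
\]
The integrands vanish on $[1,\Gamma]$ in both cases, which is why only $h=\min(\Gamma,1)$ appears. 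For $\Gamma\ge1$ the limits are $\mu_K\tau/2$ and $\mu_K\tau/3$. As a cross-check, the same values should follow from the closed forms of Theorem~\ref{thm:closed}, using $\sigma^2\log\{\tau^2/v_\sigma\}\to0$ and $\sigma^2\operatorname{arctanh}q_\sigma\to0$. Those verifications need the rates of $v_\sigma(\Gamma)\to0$ and are more delicate, so I would use them only as a check.

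\textbf{Step 4 (order of limits).} For $\Gamma\ge1$ the inner $\sigma$-limit of the ratio is $(1/2)/(1/3)=3/2$, independent of $\Gamma$, so the outer limit is $3/2$. In the other order, Theorem~\ref{thm:horizon} with $\eta=1$ gives $\mathcal R_{1,\sigma}(\Gamma)/\mathcal R_{0,\sigma}(\Gamma)\to2$ for each fixed $\sigma>0$. The subsequent $\sigma$-limit of the constant $2$ is $2$.

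The only step needing care is justifying the interchange of limit and integral in Step 3, and the uniform bound handles it. So I expect no genuine obstacle beyond checking the boundary point $s=1$ and the $s=0$ conventions.
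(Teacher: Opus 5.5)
Your proposal is correct and follows the paper's proof step for step. You take the pointwise limit of the explicit formula for $v_\sigma$ and pass it through the special curves \eqref{eq:special-curves}. You then apply dominated convergence with the uniform bound $\mu_K\tau$, integrate directly, and read off the two iterated limits from the plateau values and from \eqref{eq:large-horizon}; your extra checks at $s=0$, $s=1$ and via Theorem~\ref{thm:closed} are consistent with this but not needed.
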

The interpolation point $s=1$ appears in the vanishing-noise state through the rank of a proportional Gaussian design. It is not an asserted spectral outlier transition for the adaptive bandit matrix.

\section{Discussion}\label{sec:discussion}
The results separate two components of a sequential decision problem. Posterior uncertainty describes what the observations have learned. The direction used to choose an action describes how that knowledge is converted into reward. In the present proportional Gaussian experiment, the first component has a universal leading limit, whereas the second depends on the policy. This separation makes an exact optimization over causal policies possible.

The optimality of greedy selection rests on the combined assumptions of the model. The prior is isotropic Gaussian, the candidates are independent isotropic Gaussian vectors, their number is fixed, and the horizon is proportional to dimension. Every policy receives a fresh candidate pool. A policy cannot repeatedly query an arbitrary direction in a continuous action set. These restrictions explain why deliberate selection cannot change the leading average posterior uncertainty. Correlated candidate pools, non-isotropic priors, growing pools outside the applicable spectral comparison regime, and other horizon scales can alter the informational value of actions.

The state theorem is Bayesian. Conditional Gaussianity and concentration of the prior norm are used to convert a posterior trace into an overlap with the true parameter. No uniform-in-parameter frequentist conclusion follows automatically. Similarly, asymptotic Bayes optimality does not imply that greedy selection is the exact finite-horizon Bayes policy, nor does the comparison imply that posterior sampling is generally inferior in contextual bandits.

The instantaneous distribution and cumulative curve describe different levels of averaging. The former retains the random geometry of a fixed candidate pool; the latter averages that randomness over a growing number of rounds. Distinguishing them is necessary even when every posterior scalar state converges to a deterministic value.

Finally, the calculations use normalized spectral traces, not the leading eigenvector of the adaptive Gram matrix. A finite number of spectral outliers could coexist with the same posterior trace limit. No learning-induced outlier transition for Thompson sampling is required or established here. The regret conclusions follow from policy-uniform bulk learning and exact Gaussian decision identities.

\appendix
\section{Adaptive spectral comparison}\label{app:spectral}
This appendix proves the spectral input uniformly over policies; all other claims are proved in the subsequent appendices.

\begin{lem}[Comparison with an independent Gaussian design]\label{lem:coupling}
Under Definition~\ref{def:model}, fix $N\geq1$. There is a coupling of the selected design $X_t=[x_1\ \cdots\ x_t]$ with independent vectors $z_j\sim N(0,I_d/d)$, assembled as $Z_t=[z_1\ \cdots\ z_t]$, for which
\begin{eqn}\label{eq:coupling}
\E\|X_N-Z_N\|_F^2\leq\frac{2N\log K}{d},
\qquad
\E\max_{t\leq N}\frac{\|G_t-Z_tZ_t\topr\|_*}{d}
\leq\frac{2N}{d}\left\{\sqrt{\frac{2\log K}{d}}+\frac{\log K}{d}\right\}.
\end{eqn}
Here $\|\cdot\|_F$ and $\|\cdot\|_*$ are the Frobenius and nuclear norms. The constants are independent of the policy.
\end{lem}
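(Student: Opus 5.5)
The plan is to build the coupling one round at a time. Each selected vector is close in Wasserstein distance to a fresh Gaussian, because selecting one vector out of $K$ can only distort the law by a bounded density factor. Talagrand's transportation inequality \citep{Talagrand1996} turns that density bound into a quadratic transport bound. Write $\gamma_d=N(0,I_d/d)$ and let $\mathcal G_{t-1}$ be the $\sigma$-field generated by $\theta^\star$, $\F_{t-1}$, the seed $U_t$, and the previously constructed $z_1,\ldots,z_{t-1}$.

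\textbf{Step 1: the conditional law of $x_t$ has bounded density.} Conditionally on $\mathcal G_{t-1}$, the pool $(x_{t,a})_{a\le K}$ is still i.i.d.\ $\gamma_d$: it is independent of $\theta^\star$, the past, and the seed, and the $z_j$ will be built from these and auxiliary randomness only. The index $A_t$ is a measurable function of the pool given $\mathcal G_{t-1}$. Hence for every Borel set $B$,
\[
\PP(x_t\in B\mid\mathcal G_{t-1})\le\sum_{a\le K}\PP(x_{t,a}\in B)=K\gamma_d(B).
\]
So the conditional law $\mu_t$ of $x_t$ has density at most $K$ with respect to $\gamma_d$, which gives $\mathrm{KL}(\mu_t\,\|\,\gamma_d)\le\log K$.

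\textbf{Step 2: sequential coupling via Talagrand.} Talagrand's inequality for $\gamma_d$, whose covariance is $I_d/d$, states $W_2^2(\mu,\gamma_d)\le(2/d)\,\mathrm{KL}(\mu\|\gamma_d)$. Therefore $W_2^2(\mu_t,\gamma_d)\le 2\log K/d$.

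Given $\mathcal G_{t-1}$, choose an optimal coupling of $(x_t,z_t)$ with marginals $\mu_t$ and $\gamma_d$. Because the history lives in standard Borel spaces, a measurable selection of optimal couplings exists, and the kernels can be glued into one probability space by an auxiliary uniform variable.

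The conditional law of $z_t$ is $\gamma_d$ whatever $\mathcal G_{t-1}$ is. Consequently $z_t$ is independent of $\mathcal G_{t-1}$, and the sequence $z_1,\ldots,z_N$ is i.i.d.\ $\gamma_d$. Summing the conditional bounds $\E\|x_t-z_t\|^2\le 2\log K/d$ over $t\le N$ gives the Frobenius estimate in \eqref{eq:coupling}.

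The main technical point is this gluing step. One must check that the coupling randomness introduced at round $t$ does not leak into the later candidate pools or the policy, so that Step 1 remains valid at every round. This is handled by drawing the coupling variables independently of all future experiment variables; the policy cannot see them, so later rounds are unaffected.

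\textbf{Step 3: nuclear-norm bound.} Use the identity $x x\topr-zz\topr=(x-z)x\topr+z(x-z)\topr$. Since a rank-one matrix has nuclear norm $\|a\|\|b\|$, this gives
\[
\|xx\topr-zz\topr\|_*\le\|x-z\|(\|x\|+\|z\|)\le 2\|z\|\|x-z\|+\|x-z\|^2,
\]
where the second step uses $\|x\|\le\|z\|+\|x-z\|$. The maximum over $t\le N$ of $\|G_t-Z_tZ_t\topr\|_*$ is bounded by the sum over $j\le N$ of these rank-one differences.

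Take expectations term by term. By Cauchy--Schwarz and $\E\|z_j\|^2=1$,
\[
\E\{\|z_j\|\|x_j-z_j\|\}\le\sqrt{2\log K/d}.
\]
Combined with $\E\|x_j-z_j\|^2\le 2\log K/d$, each term contributes at most $2\sqrt{2\log K/d}+2\log K/d$. Summing over $j\le N$ and dividing by $d$ gives exactly the stated bound.

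No step uses any property of the policy beyond admissibility, so all constants are policy-independent.
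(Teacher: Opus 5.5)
Your proposal is correct and follows the paper's proof. It uses the same ingredients: the density bound $K$ relative to $\gamma_d$, relative entropy at most $\log K$, Talagrand's inequality, and sequential measurable gluing of the conditional couplings so that the $z_j$ are i.i.d. The one small difference is the nuclear-norm step: you bound each rank-one difference $x_jx_j\topr-z_jz_j\topr$ separately and sum, whereas the paper uses $G_t-Z_tZ_t\topr=Z_tD_t\topr+D_tZ_t\topr+D_tD_t\topr$ with $\|AB\topr\|_*\leq\|A\|_F\|B\|_F$, and both give the same constant.
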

\begin{proof}
Condition on the past and current policy seed, and let $\nu$ be the conditional law of $\sqrt d\,x_t$. If $\gamma_d$ is standard Gaussian measure, then for every Borel set $B$,
\[
\nu(B)=\sum_{a=1}^K\PP(\sqrt d\,x_{t,a}\in B,A_t=a\mid\text{past, seed})
\leq K\gamma_d(B).
\]
Thus $\nu\ll\gamma_d$ and its relative entropy satisfies
$\int\log(d\nu/d\gamma_d)\,d\nu\leq\log K$.
The Gaussian transportation inequality of \citet{Talagrand1996} provides a coupling with a standard Gaussian vector having expected squared distance at most $2\log K$. Rescaling gives a conditional cost at most $2\log K/d$.

The couplings may be chosen as measurable kernels. One way to see this is to use the measurable selection of optimal quadratic couplings on Euclidean spaces; alternatively, measurable couplings with arbitrarily small excess cost give the same bounds by passage to a limit. Apply the kernels successively along the experiment. Conditional on the enlarged past, the newly added comparison vector has the same $N(0,I_d/d)$ law. The comparison vectors are therefore mutually independent. The original experiment is preserved by disintegrating each coupling with respect to its selected-vector marginal. Fresh candidate pools remain independent of the previously added coupling variables. Summation of the conditional costs gives the first inequality in \eqref{eq:coupling}.

Write $D_t=X_t-Z_t$. The identity
$G_t-Z_tZ_t\topr=Z_tD_t\topr+D_tZ_t\topr+D_tD_t\topr$ and the Schatten norm inequality $\|AB\topr\|_*\leq\|A\|_F\|B\|_F$ imply
\[
\|G_t-Z_tZ_t\topr\|_*
\leq2\|Z_t\|_F\|D_t\|_F+\|D_t\|_F^2.
\]
Both Frobenius norms are nondecreasing with the number of columns. Apply Cauchy--Schwarz at $N$, use $\E\|Z_N\|_F^2=N$, and divide by $d$. This proves the second inequality. The matrix inequality is standard; see \citet{Bhatia1997}.
\end{proof}

\begin{lem}[Uniform posterior trace]\label{lem:trace}
Under Definition~\ref{def:model}, \eqref{eq:trace-uniform} holds, and $v$ in \eqref{eq:vq} satisfies
\begin{eqn}\label{eq:v-polynomial}
v(s)^2+\{\sigma^2+\tau^2(s-1)\}v(s)-\sigma^2\tau^2=0.
\end{eqn}
\end{lem}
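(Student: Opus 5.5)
The plan is to transfer the trace statement from the independent design $Z_tZ_t\topr$ to the adaptive Gram matrix $G_t$ through Lemma~\ref{lem:coupling}, and then identify the independent-design limit with the Marchenko--Pastur law. The quadratic \eqref{eq:v-polynomial} is a direct algebraic check from \eqref{eq:vq}: $v$ is the nonnegative root of $x^2+bx-\sigma^2\tau^2=0$ with $b=\sigma^2+\tau^2(s-1)$, since $v=(\sqrt{b^2+4\sigma^2\tau^2}-b)/2$.

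\textbf{Step 1 (perturbation of the trace).} Define $f(G)=d^{-1}\tr(\tau^{-2}I+\sigma^{-2}G)^{-1}$ on positive semidefinite matrices. For $A,B\succeq0$ the resolvent identity gives
\[
(\tau^{-2}I+\sigma^{-2}A)^{-1}-(\tau^{-2}I+\sigma^{-2}B)^{-1}=\sigma^{-2}(\tau^{-2}I+\sigma^{-2}A)^{-1}(B-A)(\tau^{-2}I+\sigma^{-2}B)^{-1}.
\]
Each resolvent has operator norm at most $\tau^2$, so $|f(A)-f(B)|\le \tau^4\sigma^{-2}d^{-1}\|A-B\|_*$. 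Take $N=\lfloor\Gamma d\rfloor$ in Lemma~\ref{lem:coupling}. This yields
\[
\E\sup_{t\le N}|f(G_t)-f(Z_tZ_t\topr)|\le \tau^4\sigma^{-2}\cdot 2\Gamma\{\sqrt{2\log K/d}+\log K/d\}\to0,
\]
uniformly over $\pi\in\Pi_{d,K}$, because the coupling constants do not depend on the policy. The problem is thereby reduced to a statement about the independent design only, with no policy left in it.

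\textbf{Step 2 (independent design, fixed $s$).} Write $Z_tZ_t\topr$ with $t=\lfloor sd\rfloor$. It is a Wishart-type matrix with aspect ratio $t/d\to s$, and its entries have variance $1/d$. Its empirical spectral distribution converges almost surely to the Marchenko--Pastur law with ratio $s$, including a mass $(1-s)_+$ at zero.

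\begin{itemize}
\item Since $f$ is a bounded continuous spectral functional ($\lambda\mapsto(\tau^{-2}+\sigma^{-2}\lambda)^{-1}\in(0,\tau^2]$), $f(Z_tZ_t\topr)$ converges almost surely, and hence in $L^1$ by boundedness, to $\int(\tau^{-2}+\sigma^{-2}\lambda)^{-1}dF_s(\lambda)$.
\item This integral equals $\sigma^2 m(-\sigma^2/\tau^2)$, where $m$ is the Stieltjes transform of $F_s$. The MP self-consistent equation, in the form $m(z)=1/(-z+s/(1+m(z))\cdot\ldots)$, should reduce to \eqref{eq:v-polynomial}.
\end{itemize}

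I would verify this reduction by the standard leave-one-out heuristic, made rigorous by the MP equation. Each added column $z_j$ lowers the trace through Sherman--Morrison by $\sigma^{-2}z_j\topr C^2 z_j/(1+\sigma^{-2}z_j\topr Cz_j)$. In the limit this gives $v'=-\sigma^{-2}w/(1+\sigma^{-2}v)$ with $w=d^{-1}\tr C^2$. Equivalently, I would differentiate \eqref{eq:v-polynomial} and check consistency with \eqref{eq:ode}. Matching the root through positivity and $v(0)=\tau^2$ selects the correct branch.

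\textbf{Step 3 (uniformity in $s$).} This is the main obstacle: passing from pointwise convergence to $\E\sup_{s\le\Gamma}$. The bridge is monotonicity. The map $t\mapsto f(Z_tZ_t\topr)$ is nonincreasing, because the Gram matrix grows in the Loewner order. The limit $v$ is also continuous and nonincreasing. A Dini/Pólya-type argument therefore applies.

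\begin{itemize}
\item Choose a finite grid $0=s_0<\dots<s_M=\Gamma$ on which $v$ varies by less than $\epsilon$ between neighbours.
\item Sandwich $f$ between its values at the grid points; $\lfloor sd\rfloor$ is monotone in $s$, so the sandwich is valid.
\item Bound the supremum by $\max_i|f(Z_{\lfloor s_id\rfloor}Z_{\lfloor s_id\rfloor}\topr)-v(s_i)|+\epsilon$.
\end{itemize}

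The finite maximum tends to zero in $L^1$ by Step 2, and then $\epsilon\downarrow0$. Combining this with Step 1, and noting that $C_t$ is exactly $(\tau^{-2}I+\sigma^{-2}G_t)^{-1}$ so that $d^{-1}\tr C_t=f(G_t)$, gives \eqref{eq:trace-uniform}.
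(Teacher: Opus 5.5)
Your overall route matches the paper's. You transfer from $G_t$ to $Z_tZ_t\topr$ using a nuclear-norm Lipschitz bound together with Lemma~\ref{lem:coupling}, apply Marchenko--Pastur with a bounded continuous test function, evaluate the Stieltjes transform at a negative real point, and use a monotone grid sandwich for the supremum. Your resolvent-identity derivation of the constant $\tau^4/\sigma^2$ is a slightly more self-contained substitute for the eigenvalue perturbation bound the paper quotes. The uniformity step you call the main obstacle is the paper's partition argument and is routine.

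\textbf{The gap: identifying the limit with $v(s)$.} You write the self-consistent equation with a placeholder and say it ``should reduce'' to \eqref{eq:v-polynomial}. Neither of your proposed verifications establishes this.
\begin{itemize}
\item The Sherman--Morrison recursion $v'=-w/(\sigma^2+v)$ involves $w=d^{-1}\tr C^2$, which is not a function of $v$, so the equation is not closed.
\item Differentiating \eqref{eq:v-polynomial} and matching \eqref{eq:ode} is circular, because \eqref{eq:ode} is itself derived from \eqref{eq:v-polynomial}.
\end{itemize}

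\textbf{The fix.} Drop the ellipsis: the equation $m=1/\{-z+s/(1+m)\}$ for the limiting Stieltjes transform of $Z_tZ_t\topr$ is correct as it stands. At $z=-\lambda<0$ it becomes $\lambda m^2+(\lambda+s-1)m-1=0$, which is exactly the resolvent equation the paper uses. Since $\int(\tau^{-2}+\sigma^{-2}x)^{-1}\,dF_s(x)=\sigma^2m(-\sigma^2/\tau^2)$, set $\lambda=\sigma^2/\tau^2$ and $v=\sigma^2 m$, then multiply through by $\sigma^2\tau^2$. This gives \eqref{eq:v-polynomial}. The product of the roots is negative, so positivity of $m(-\lambda)=\int(x+\lambda)^{-1}\,dF_s$ alone selects the root in \eqref{eq:vq}.

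If you prefer the leave-one-out route, it closes only after you add $w=-\partial v/\partial(\tau^{-2})$. You must then solve the resulting first-order PDE by characteristics, which yields $\tau^{-2}=1/v-s/(\sigma^2+v)$. That is more work, and harder to make rigorous, than the algebra above.
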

\begin{proof}
For positive semidefinite matrices $A,B$, the eigenvalue perturbation bound in nuclear norm implies
\[
\left|\frac1d\tr f(A)-\frac1d\tr f(B)\right|
\leq\frac{\operatorname{Lip}(f)}d\|A-B\|_*.
\]
Use $f(x)=(\tau^{-2}+\sigma^{-2}x)^{-1}$, whose Lipschitz constant is at most $\tau^4/\sigma^2$. Taking $N=\lfloor\Gamma d\rfloor$ in Lemma~\ref{lem:coupling}, the difference between the selected-design trace and the independent-Gaussian trace converges to zero in expected supremum norm, uniformly over policies.

For fixed $s>0$, the empirical law of $Z_{\lfloor sd\rfloor}Z_{\lfloor sd\rfloor}\topr$ converges to the Marchenko--Pastur measure
\[
(1-s)_+\delta_0+
\frac{\sqrt{\{(1+\sqrt s)^2-x\}\{x-(1-\sqrt s)^2\}}}{2\pi x}
\ind_{[(1-\sqrt s)^2,(1+\sqrt s)^2]}(x)\,dx;
\]
see \citet{MP1967,BaiSilverstein2010}. The mean of this normalization is $s$. Since $f$ is bounded and continuous, its empirical integral converges in $L^1$ to the integral against this measure. The negative-real resolvent of the measure satisfies
$\lambda h^2+(\lambda+s-1)h-1=0$ for $\lambda>0$, where $h$ is the positive root. Taking $\lambda=\sigma^2/\tau^2$ and multiplying by $\sigma^2$ gives exactly $v(s)$ and \eqref{eq:v-polynomial}. At $s=0$, the trace is identically $\tau^2$.

Each Gaussian posterior trace path is nonincreasing in $s$, takes values in $[0,\tau^2]$, and converges at fixed times to the continuous function $v$. On a finite partition of $[0,\Gamma]$, monotonicity bounds its maximum discrepancy by the maximum discrepancy at partition points plus the modulus of continuity of $v$. First let $d\to\infty$, then refine the partition. This gives convergence in expected supremum norm for the Gaussian trace paths. Lemma~\ref{lem:coupling} transfers it to every policy with the same bound.
\end{proof}

\section{Posterior identities and geometry}\label{app:geometry}
\subsection{Proof of Theorem~\ref{thm:geometry}}
\begin{proof}
\textbf{Conditional posterior.}
Given the observed history, the candidate densities and policy kernels do not depend on the value assigned to $\theta^\star$. Their factors therefore cancel from Bayes' formula. The remaining likelihood is proportional to
\[
\exp\left\{-\frac{\|\theta\|^2}{2\tau^2}
-\frac1{2\sigma^2}\sum_{j=1}^t(Y_j-x_j\topr\theta)^2\right\}.
\]
Completing the square yields $N(m_t,C_t)$ with \eqref{eq:posterior}. This calculation remains valid when every past candidate and seed is retained. The matrix bound follows directly from $G_t\succeq0$. Lemma~\ref{lem:trace} proves \eqref{eq:trace-uniform}.

\textbf{Norms and posterior means.}
Conditional Gaussianity gives
\[
\E(\|\theta^\star\|^2\mid\F_n)=\|m_n\|^2+\tr C_n.
\]
The prior norm has mean $d\tau^2$ and variance $2d\tau^4$. Conditional expectation is an $L^2$ contraction, so
\begin{eqn}\label{eq:mean-norm-bound}
\E\left[\frac{\|m_n\|^2+\tr C_n}{d}-\tau^2\right]^2
\leq\frac{2\tau^4}{d}.
\end{eqn}
This bound is uniform in the policy and time. Combining it with Lemma~\ref{lem:trace} proves the first convergence in \eqref{eq:state}. Conditional on $\F_n$,
\[
\E\{(\theta^\star)\topr m_n\mid\F_n\}=\|m_n\|^2,
\qquad
\Var\{(\theta^\star)\topr m_n\mid\F_n\}
=m_n\topr C_nm_n\leq\tau^2\|m_n\|^2.
\]
Since $\E\|m_n\|^2\leq d\tau^2$, the normalized difference between this inner product and $\|m_n\|^2$ tends to zero in $L^2$, uniformly over policies. Similarly,
\[
\E(\|\theta^\star-m_n\|^2\mid\F_n)=\tr C_n,
\quad
\Var(\|\theta^\star-m_n\|^2\mid\F_n)=2\tr C_n^2\leq2d\tau^4.
\]
These inequalities prove the remaining convergences. The prior norm convergence follows directly from its variance. For $s>0$, the limiting posterior-mean norm is positive, so division by the limiting norms gives the cosine $q(s)$.

\textbf{Differential equation.}
The positive root of \eqref{eq:v-polynomial} is \eqref{eq:vq}, and
$2v(s)+\sigma^2+\tau^2(s-1)$ equals the strictly positive square root appearing there. Implicit differentiation gives \eqref{eq:ode}. In particular, $v(0)=\tau^2$, $v'(s)<0$, and $0<v(s)<\tau^2$ for $s>0$.

\textbf{Randomized score vectors.}
Given $\F_n$, represent $\theta^\star=m_n+e$ and $B_{n+1}^{(\eta)}=m_n+\sqrt\eta\,e'$, with $e,e'$ conditionally independent $N(0,C_n)$ vectors. The conditional first two moments give
\[
\begin{split}
\E(\|B_{n+1}^{(\eta)}\|^2\mid\F_n)&=\|m_n\|^2+\eta\tr C_n,\\
\Var(\|B_{n+1}^{(\eta)}\|^2\mid\F_n)&=4\eta m_n\topr C_nm_n+2\eta^2\tr C_n^2,\\
\E\{(\theta^\star)\topr B_{n+1}^{(\eta)}\mid\F_n\}&=\|m_n\|^2,\\
\Var\{(\theta^\star)\topr B_{n+1}^{(\eta)}\mid\F_n\}
&=(1+\eta)m_n\topr C_nm_n+\eta\tr C_n^2.
\end{split}
\]
Their expected variances are $O(d)$ for fixed $\eta$, uniformly in the policy and time. This proves \eqref{eq:sample-state}. For $s>0$, both limiting norms are positive and division gives $c_\eta(s)$.

It remains to justify the assertion at $s=0$ when $n_d/d\to0$, especially for $\eta=0$. For every pre-context unit direction $u$ measurable from $\F_{n_d}$ and independent policy randomization,
\[
\frac{|(\theta^\star)\topr u|}{\sqrt d}
\leq\frac{\|m_{n_d}\|}{\sqrt d}
+\frac{|(\theta^\star-m_{n_d})\topr u|}{\sqrt d}.
\]
The first term tends to zero in $L^2$ because $v(0)=\tau^2$. The second has second moment at most $\tau^2/d$. As $\|\theta^\star\|/\sqrt d\to\tau$, the cosine tends to zero. This includes the deterministic convention at a zero score vector and finishes the proof.
\end{proof}

\subsection{Proof of Proposition~\ref{prop:one-step}}
\begin{proof}
Conditional on $u_t$, write
$\sqrt d\,x_{t,a}=G_au_t+w_a$, where the $G_a$ are independent standard normal variables and the $w_a$ are independent centered Gaussian vectors on the orthogonal complement of $u_t$. The two collections are independent. Selection depends only on $G_1,\ldots,G_K$. Hence
\[
\E(x_{t,A_t}\mid u_t,\text{past})=\mu_Ku_t/\sqrt d.
\]
Conditional on $\theta^\star$, the true rewards of all candidates are independent centered Gaussians with variance $\|\theta^\star\|^2/d$. Their expected maximum is $\mu_K\|\theta^\star\|/\sqrt d$. Subtraction yields \eqref{eq:direction-regret}.

For an arbitrary admissible policy, the current pool and seed are independent of $\theta^\star$ conditionally on $\F_{t-1}$. Conditional on them, the expected selected reward is $x_{t,A_t}\topr m_{t-1}$, which is at most $\max_a x_{t,a}\topr m_{t-1}$. Integrating the pool gives \eqref{eq:reward-bound}. Greedy selection attains that maximum; when $m_{t-1}=0$, both sides are zero regardless of the tie convention. The expected maximum true reward remains $\mu_K\E\|\theta^\star\|/\sqrt d$. This proves the first identity in \eqref{eq:greedy-exact}. A random index has zero expected true reward, proving the second.
\end{proof}

\section{Regret limits, optimality, and comparisons}\label{app:regret}
\subsection{Proof of Theorem~\ref{thm:regret}}
\begin{proof}
\textbf{Conditional mean and pointwise limit.}
Reveal the true parameter for purposes of the proof and, at each round, reveal the fresh score-vector randomization before the candidates. Define
\[
g_t=\frac{\mu_K}{\sqrt d}\{\|\theta^\star\|-(\theta^\star)\topr\nu_t^{(\eta)}\}.
\]
By Proposition~\ref{prop:one-step}, this is the conditional expected regret after the direction is revealed and before the candidates are drawn. Theorem~\ref{thm:geometry} gives $g_{\lfloor sd\rfloor+1}\to\rho_\eta(s)$ in probability for every $s\geq0$. Moreover,
$0\leq g_t\leq2\mu_K\|\theta^\star\|/\sqrt d$, and the latter variables have uniformly bounded second moments. Consequently the convergence is in $L^1$, which proves \eqref{eq:instant-mean}. For $\eta=0,1$, substitution in \eqref{eq:curves} gives \eqref{eq:special-curves}.

\textbf{Martingale fluctuations.}
Let $\mathcal E_t=\F_t\vee\sigma(\theta^\star)$, including the policy seeds already specified in Definition~\ref{def:model}. Although $g_t$ also depends on the current seed, the tower property implies
$\E(r_t^\eta-g_t\mid\mathcal E_{t-1})=0$.
Thus $\sum_{t\leq n}(r_t^\eta-g_t)$ is a martingale for the end-of-round filtration. Since
\[
0\leq r_t^\eta\leq2\max_{a\leq K}|x_{t,a}\topr\theta^\star|,
\]
conditioning on $\theta^\star$ and using the Gaussian maximum of fixed size gives
$\sup_{d,t}\E(r_t^\eta)^2\leq4\tau^2\E\max_{a\leq K}|G_a|^2<\infty$.
The conditional-mean property also yields $\E(r_t^\eta-g_t)^2\leq\E(r_t^\eta)^2$. Doob's inequality therefore gives
\begin{eqn}\label{eq:martingale-bound}
\E\max_{n\leq\lfloor\Gamma d\rfloor}
\left|\frac1d\sum_{t=1}^n(r_t^\eta-g_t)\right|^2
\leq\frac{C_{K,\tau}\Gamma}{d},
\end{eqn}
where $C_{K,\tau}$ is finite and independent of $d$.

\textbf{Accumulated conditional loss.}
Define a piecewise constant function on $[0,\Gamma]$ by $\widetilde g_d(s)=g_{\lfloor sd\rfloor+1}$. Pointwise $L^1$ convergence and the uniform integrable bound above imply, by dominated convergence,
\[
\E\int_0^\Gamma|\widetilde g_d(s)-\rho_\eta(s)|\,ds\longrightarrow0.
\]
For any $\gamma\leq\Gamma$, the difference between
$d^{-1}\sum_{t\leq\lfloor\gamma d\rfloor}g_t$ and $\int_0^\gamma\widetilde g_d(s)\,ds$ is at most $2\mu_K\|\theta^\star\|/(d\sqrt d)$. Bounding the difference of their limiting primitives by the integral of the absolute difference gives convergence in expected supremum norm. Combining this with \eqref{eq:martingale-bound} proves \eqref{eq:realized-regret} and, by taking expectations, its Bayes-regret consequence.

For random selection the corresponding conditional mean is simply $\mu_K\|\theta^\star\|/\sqrt d$. The same moment and martingale bounds apply and prove the stated random-selection limits.
\end{proof}

\subsection{Proof of Theorem~\ref{thm:optimal}}
\begin{proof}
The pointwise bound \eqref{eq:reward-bound} implies, for every admissible policy,
\begin{eqn}\label{eq:policy-lower}
R_{d,T}^\pi\geq\mu_K\sum_{t=1}^T
\left\{\frac{\E\|\theta^\star\|}{\sqrt d}
-\frac{\E\|m_{t-1}^\pi\|}{\sqrt d}\right\}.
\end{eqn}
We justify uniform replacement of the posterior-mean norms. By \eqref{eq:mean-norm-bound}, the inequality $|\sqrt a-\sqrt b|\leq\sqrt{|a-b|}$ for nonnegative $a,b$, and Lemma~\ref{lem:trace},
\[
\begin{split}
&\sup_{\pi\in\Pi_{d,K}}\sup_{0\leq n\leq\lfloor\Gamma d\rfloor}
\E\left|\frac{\|m_n^\pi\|}{\sqrt d}-\sqrt{\tau^2-v(n/d)}\right|\\
&\quad\leq
\left\{\tau^2\sqrt{\frac2d}
+\sup_{\pi\in\Pi_{d,K}}\E\sup_{0\leq s\leq\Gamma}
\left|d^{-1}\tr C_{\lfloor sd\rfloor}^\pi-v(s)\right|\right\}^{1/2}
\longrightarrow0.
\end{split}
\]
Also $\E\|\theta^\star\|/\sqrt d\to\tau$. Divide \eqref{eq:policy-lower} by $d$, take $T=\lfloor\Gamma d\rfloor$, and use the Riemann sum for the continuous function $\sqrt{\tau^2-v(s)}$. The result is a lower bound $\mathcal R_0(\Gamma)-o(1)$ with an error independent of the policy. It therefore applies after taking the infimum over $\Pi_{d,K}$. The upper bound is the greedy limit in Theorem~\ref{thm:regret}. Together they prove both conclusions.
\end{proof}

\subsection{Proof of Theorem~\ref{thm:comparison}}
\begin{proof}
For $s>0$, $0<v(s)<\tau^2$. Differentiate \eqref{eq:curves} with respect to $\eta$ to obtain \eqref{eq:temperature-monotonicity}. Strict positivity follows from $\mu_K>0$ for $K\geq2$. The special formulas \eqref{eq:special-curves} and $0<q(s)<1$ give
\[
0<\rho_0(s)<\rho_1(s)<\mu_K\tau,
\qquad \rho_1(s)=\rho_0(s)\{1+q(s)\}.
\]
Integrating these strict inequalities over any interval of positive length proves the cumulative comparisons. Subtracting the two special curves gives the excess-regret integral in \eqref{eq:price}. At $s=0$, both instantaneous regrets equal $\mu_K\tau$ and the ratio identity remains valid.
\end{proof}

\section{Distribution of instantaneous regret}\label{app:distribution}
\subsection{Proof of Theorem~\ref{thm:distribution}}
\begin{proof}
For a fixed round, condition on $\theta^\star$ and the selected score direction $\nu_t^{(\eta)}$, before revealing the candidates. The normalized score and true reward of arm $a$ are
\[
\sqrt d\,x_{t,a}\topr\nu_t^{(\eta)},
\qquad
\frac{\sqrt d\,x_{t,a}\topr\theta^\star}{\|\theta^\star\|}.
\]
They are a standard bivariate Gaussian pair with correlation
$(\theta^\star)\topr\nu_t^{(\eta)}/\|\theta^\star\|$, independently across arms. Theorem~\ref{thm:geometry} gives convergence of this correlation to $c_\eta(s)$ and of $\|\theta^\star\|/\sqrt d$ to $\tau$. The finite array of scores and rewards therefore converges in distribution to the Gaussian array of Definition~\ref{def:decision}, with the reward scale $\tau$.

For $c<1$, the limiting scores and rewards have no ties almost surely. The selected-index and maximum-loss maps are continuous outside these tie sets. The continuous mapping theorem yields the first limit in \eqref{eq:loss-distribution}. The event that the two maximizers agree also has boundary contained in tie sets, proving convergence of its probability.

To compute that probability, fix a candidate with score $u$ and true reward $v$. Each other pair lies below it in both coordinates with probability $\Phi_2(u,v;c)$, independently. Integrating its density and summing over the $K$ possible common maximizers gives \eqref{eq:agreement}. At $c=0$, the two maximizing indices are independent and uniform, giving $p_K(0)=1/K$.

For the mean loss, the true rewards $U_a$ are standard normal, so $\E\max_a U_a=\mu_K$. Since $J$ depends only on the scores, $H_J$ has mean zero, and $\E U_J=c\E G_J=c\mu_K$. Hence $\E D_K(c)=\mu_K(1-c)$. Alternatively, uniform integrability follows from the same squared-maximum bound as in the proof of Theorem~\ref{thm:regret}.

Finally, for $0\leq c<1$, the joint density of each score--reward pair is strictly positive on all of $\R^2$. There are nonempty open sets of arrays on which a common candidate strictly maximizes both coordinates, and nonempty open sets on which the maximizers differ. Both events have positive probability. The first has loss zero, and the second has strictly positive loss. This proves $p_K(c)\in(0,1)$ and nondegeneracy. For positive noise and finite $s$, Definition~\ref{def:limits} gives $c_\eta(s)<1$, completing the argument.
\end{proof}

\section{Evaluation and asymptotic analysis of the curves}\label{app:scaling}
\subsection{Proof of Theorem~\ref{thm:closed}}
\begin{proof}
Solving \eqref{eq:v-polynomial} for $s$ gives the change of variables
\begin{eqn}\label{eq:time-from-v}
s=1-\frac{\sigma^2}{\tau^2}+\frac{\sigma^2}{v}-\frac v{\tau^2},
\qquad
\frac{ds}{dv}=-\frac{\sigma^2}{v^2}-\frac1{\tau^2}.
\end{eqn}
The variable $v$ decreases from $\tau^2$ to $v(\Gamma)$. Substitute \eqref{eq:time-from-v} into $\mathcal R_1=(\mu_K/\tau)\int_0^\Gamma v(s)\,ds$. Integration yields
\[
\int_0^\Gamma v(s)\,ds
=\sigma^2\log\frac{\tau^2}{v(\Gamma)}
+\frac{\tau^4-v(\Gamma)^2}{2\tau^2},
\]
which proves \eqref{eq:TS-closed}.

For greedy selection, substitute $v=\tau^2(1-q^2)$ into \eqref{eq:time-from-v} to obtain
\[
s=q^2+\frac{\sigma^2}{\tau^2}\frac{q^2}{1-q^2},
\qquad
\frac{ds}{dq}=2q+\frac{2\sigma^2q}{\tau^2(1-q^2)^2}.
\]
Multiplication by $1-q$ and integration from zero to $q(\Gamma)$ gives the polynomial term $q^2-(2/3)q^3$. For the remaining term, the derivative identity
\[
\frac{d}{dq}\left\{\operatorname{arctanh}q-\frac q{1+q}\right\}
=\frac{2q}{(1-q)(1+q)^2}
=\frac{2q(1-q)}{(1-q^2)^2}
\]
gives exactly \eqref{eq:greedy-closed}.
\end{proof}

\subsection{Proof of Theorem~\ref{thm:horizon}}
\begin{proof}
From \eqref{eq:ode},
$v'(0)=-\tau^4/(\tau^2+\sigma^2)$. The explicit square-root formula is smooth near zero, so
\[
v(s)=\tau^2-\frac{\tau^4}{\tau^2+\sigma^2}s+O(s^2),
\qquad
q(s)=\frac{\tau}{\sqrt{\tau^2+\sigma^2}}\sqrt s+O(s^{3/2}).
\]
Substitution into \eqref{eq:special-curves} and integration gives \eqref{eq:small-horizon}.

For large $s$, \eqref{eq:time-from-v} implies $v(s)=\sigma^2/s+O(s^{-2})$. Expanding \eqref{eq:curves} at $v=0$ gives
\[
\frac{\tau^2-v}{\sqrt{\tau^2+(\eta-1)v}}
=\tau-\frac{1+\eta}{2\tau}v+O(v^2),
\qquad
\rho_\eta(s)=\frac{\mu_K(1+\eta)\sigma^2}{2\tau s}+O(s^{-2}).
\]
The constants may depend on the fixed parameters and $\eta$. Integrate from any fixed positive lower endpoint to $\Gamma$. The integrable $O(s^{-2})$ remainder proves the first assertion in \eqref{eq:large-horizon}. Its leading coefficient for $\eta=0$ is strictly positive, so taking the ratio proves the second.
\end{proof}

\subsection{Proof of Theorem~\ref{thm:noise}}
\begin{proof}
The explicit expression \eqref{eq:vq} converges, as $\sigma\downarrow0$, to
$\{\tau^2|s-1|-\tau^2(s-1)\}/2=\tau^2(1-s)_+$.
The two special regret formulas give \eqref{eq:noise-state}. Since both instantaneous curves are bounded by $\mu_K\tau$, dominated convergence applies on every finite interval. Direct integration gives
\[
\int_0^\Gamma(1-s)_+\,ds=h-h^2/2,
\qquad
\int_0^\Gamma\{1-\sqrt{\min(s,1)}\}\,ds
=h-(2/3)h^{3/2},
\]
with $h$ as in the statement. This proves \eqref{eq:noise-regret} and the plateau values. Taking the ratios for $\Gamma\geq1$ gives the first iterated limit. The second follows from \eqref{eq:large-horizon} at every fixed positive $\sigma$.
\end{proof}

\bibliographystyle{plainnat}
\bibliography{references}
\end{document}